\theoremstyle{plain}
\newtheorem{theorem}{Theorem}[section]
\newtheorem{proposition}[theorem]{Proposition}
\newtheorem{lemma}[theorem]{Lemma}
\theoremstyle{definition}
\newtheorem{definition}[theorem]{Definition}
\theoremstyle{remark}
\newtheorem{remark}[theorem]{Remark}
\newcommand{\indicator}[1]{\mathbbm{1}_{#1}}
\newcommand{\layer}[1]{^{\scalebox{0.5}{$(#1)$}}}
\newcommand{\R}{\mathbb{R}}
\tikzstyle{startstop} = [rectangle, rounded corners, minimum width=3cm, minimum height=1cm, text centered, draw=black]
\tikzstyle{process} = [rectangle, minimum width=3cm, minimum height=1cm, text centered, draw=black]
\tikzstyle{arrow} = [thick,->,>=stealth]
\newcommand{\cmark}{\ding{51}}%
\newcommand{\xmark}{\ding{55}}%
\icmltitlerunning{Advancing Constrained Monotonic Neural Networks: Achieving Universal Approximation Beyond Bounded Activations}
\newcommand{\wen}[1]{{\color{blue}#1}}
\begin{document}

\twocolumn[
\icmltitle{Advancing Constrained Monotonic Neural Networks: \\
Achieving Universal Approximation Beyond Bounded Activations}



\icmlsetsymbol{equal}{*}

\begin{icmlauthorlist}
\icmlauthor{Davide Sartor}{equal,dei}
\icmlauthor{Alberto Sinigaglia}{equal,hit}
\icmlauthor{Gian Antonio Susto}{dei,hit}
\end{icmlauthorlist}

\icmlaffiliation{dei}{Department of Information Engineering, University of Padova, Padova (PD), Italy}
\icmlaffiliation{hit}{Human Inspired Technology Research Centre, University of Padova, Padova (PD), Italy}

\icmlcorrespondingauthor{Davide Sartor}{davide.sartor.4@phd.unipd.it}
\icmlcorrespondingauthor{Alberto Sinigaglia}{alberto.sinigaglia@phd.unipd.it}

\icmlkeywords{Machine Learning, ICML}

\vskip 0.45in
]



\printAffiliationsAndNotice{\icmlEqualContribution} 
\begin{abstract}
Conventional techniques for imposing monotonicity in MLPs by construction involve the use of non-negative weight constraints and bounded activation functions, which pose well-known optimization challenges.
In this work, we generalize previous theoretical results, showing that MLPs with non-negative weight constraint and activations that saturate on alternating sides are universal approximators for monotonic functions.
Additionally, we show an equivalence between the saturation side in the activations and the sign of the weight constraint. This connection allows us to prove that MLPs with convex monotone activations and non-positive constrained weights also qualify as universal approximators, in contrast to their non-negative constrained counterparts.
Our results provide theoretical grounding to the empirical effectiveness observed in previous works while leading to possible architectural simplification.
Moreover, to further alleviate the optimization difficulties, we propose an alternative formulation that allows the network to adjust its activations according to the sign of the weights. This eliminates the requirement for weight reparameterization, easing initialization and improving training stability.
Experimental evaluation reinforces the validity of the theoretical results, showing that our novel approach compares favourably to traditional monotonic architectures.
    
\end{abstract}

\begin{figure}[h]
    \vskip 0.25in
    \begin{center} \centerline{\includegraphics[width=1.0\columnwidth]{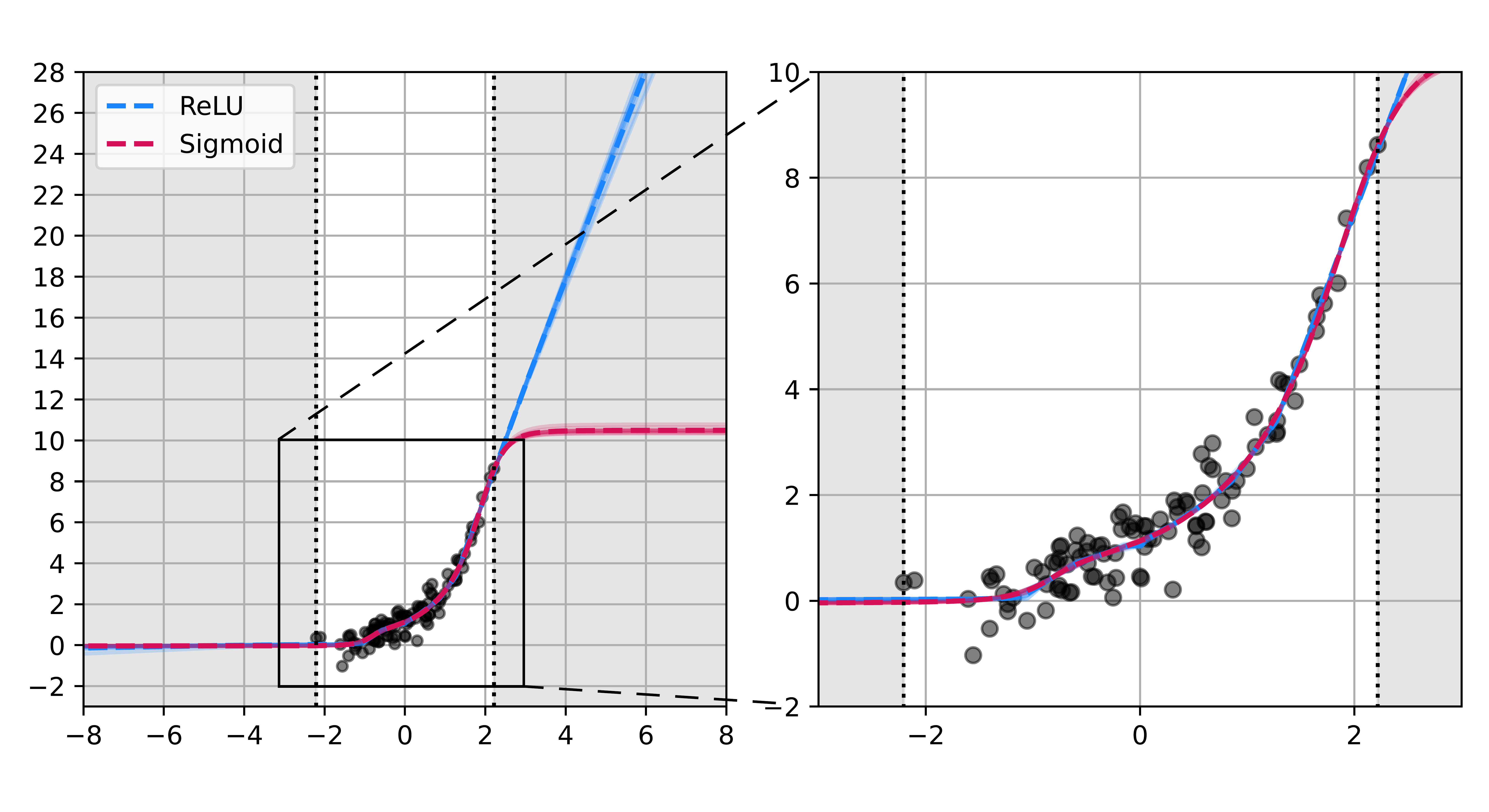}}
        \caption{Monotone MLPs with weight-constraint and bounded activations (pink) and our proposed approach based on \text{ReLU} (blue). The former can only represent bounded functions and, thus, cannot extrapolate the data trend, which is important in many domains, such as time-series analysis and predictive maintenance.}
        \label{fig:frontpage}
    \end{center}
    \vskip -0.1in
\end{figure}

\section{Introduction}
\label{sec:introduction}

Monotonic neural networks represent a pivotal shift in deep learning. They bridge the gap between high-capacity non-linear models and the need for interpretable, consistent outputs in various applications. Monotonic MLPs preserve monotonic input-output relationships, making them particularly suitable for domains that require justified and transparent decisions \citep{gupta2016monotonic, nguyen2019mononet}. 
Furthermore, monotonic MLPs have been exploited to build novel architectures for density estimators \citep{chilinski2020neural,omi2019fully,tagasovska2019single}, survival analysis \citep{jeanselme2023neural}, and remaining useful life \citep{sanchez2023simplified}.
 In general, the enforcement of constraints on the model architecture guarantees certain desired properties, such as fairness or robustness. Furthermore, explicitly designing the model with inductive biases that exploit prior knowledge has been shown to be fundamental for efficient generalization \citep{dugas2000incorporating, milani2016fast, you2017deep}. For this reason, the use of monotonic networks can help both in improving performance \citep{mitchell1980need} and in data efficiency \citep{velivckovic2019resurgence}. 
 
Recent works in this field usually fall into one of the following two categories: `soft monotonicity' and `hard monotonicity'.
Soft monotonicity employs optimization constraints \citep{gupta2019incorporate, sill1996monotonicity}, usually as additional penalty terms in the loss. This class of approaches benefits from its simple implementation and inexpensive computation. They exploit the power of Multi-Layer Perceptrons (MLPs) to be able to approximate arbitrary functions. 
Since penalties are usually applied to dataset samples, monotonicity is enforced only \emph{in-distribution}, therefore struggling to generalize the constraint out-of-distribution. 

Hard monotonicity instead imposes constraints on the model architecture to ensure monotonicity by design \citep{wehenkel2019unconstrained, kitouni2023expressive}.
The simplest way to do so is to constrain the MLP weights to be non-negative and to use monotonic activations \citep{daniels2010monotone}. The methods proposed in the literature that exploit this parametrization \citep{daniels2010monotone, wehenkel2019unconstrained} require the usage of bounded activations, such as sigmoid and hyperbolic tangent, which introduce well-known optimization challenges due to vanishing gradients \wen{\citep{dubey2022activation,ravikumar2023mitigating,szandala2021review,nair2010rectified,glorot2010understanding,goodfellow2013maxout}}.
This shortcoming is even more evident in monotonic NNs with non-negative weights, where bounded activations make the initialization even more crucial for optimization.
As discussed in \cref{sec:vanishing_gradient}, poor initialization can lead to saturated activations at the beginning of training, thus significantly slowing it down. Furthermore, the use of bounded activations leads to MLPs that can only represent bounded functions, which may hinder generalization, as shown in \cref{fig:frontpage}.
 
Indeed, most recent advances in NNs use activations in the family of rectified linear functions, such as the popular  \text{ReLU} activation \citep{vaswani2017attention, he2016resnet}. However, the use of \text{ReLU} activations in MLPs with non-negative weights is problematic. In fact, an MLP that uses convex activations (such as \text{ReLU}) in conjunction with non-negative weights can only approximate convex functions, which severely limits applications \citep{daniels2010monotone, mikulincer2022size}. For this reason, many approaches in the literature still rely on including bounded activations in the architecture in order to ensure universal approximation abilities of the network.

The primary aim of this work is to extend the theoretical basis of monotonic-constrained MLPs, by showing that it is still possible to achieve universal approximation using activations that saturate on one side, such as \text{ReLU}. To show that these new findings are not just theoretical tools, we create a new architecture that only uses saturating activations with performances comparable to state-of-the-art.
Our contributions can be summarized as follows:
\begin{enumerate}
    \item We show that constrained MLPs that alternate left-saturating and right-saturating monotonic activations can approximate any monotonic function. We also demonstrate that this can be achieved with a constant number of layers, which matches the best-known bound for threshold-activated networks.
    
    \item Contrary to the non-negative-constrained formulation, we prove that an MLP with at least 4 layers, non-positive-constrained weights, and \text{ReLU} activation is a universal approximator. More generally, this holds true for any saturating monotonic activation.
    
    \item We propose a simple parametrization scheme for monotonic MLPs that (i) can be used with saturating activations; (ii) does not require constrained parameters, thus making the optimization more stable and less sensitive to initialization; (iii) does not require multiple activations; (iv) does not require any prior choice of alternation of any activation and its point reflection.
\end{enumerate}

Our discussion will primarily focus on \text{ReLU} activations, which are widely used in the latest advancements in deep learning. However, the results apply to the broader family of monotonic activations that saturate on at least one side. This includes most members of the family of \text{ReLU}-like activations such as exponential, \text{ELU} \citep{clevert2015fast}, SeLU \citep{klambauer2017self}, \text{CELU} \citep{barron2017continuously}, \text{SReLU} \citep{jin2016deep}, and many more.

\section{Related work}
\label{sec:related_work}

Monotonicity in neural network architectures is an active area of research that has been addressed both theoretically and practically. Prior work can be broadly classified into two categories: architectures designed with built-in constraints (hard monotonicity) and those employing regularization and heuristic techniques to enforce monotonicity (soft monotonicity). Our contribution falls into the former category.

\subsection{Hard Monotonicity}
Hard-monotonicity aims at building MLPs with provably monotonicity for any point of the input space. They do so by constructing the MLP so that only monotonic functions can be learned. Initial attempts were Deep Lattice Networks \citep{you2017deep} and methods constraining all weights to have the same sign exemplify this approach \citep{dugas2009incorporating, runje2023constrained, kim2024scalable}. However, constraining the parameters to be non-negative violates the original MLP formulation, thus invalidating the universal approximation theorem. Indeed, the universal approximation capability of the architecture is proven only under the condition that the threshold function is used as activation and that the network is at least 4 layers deep \citep{runje2023constrained}. Furthermore, the non-negative parameter constraint also creates issues from an initialization standpoint, as the assumption for popular initializers might be violated for positive semidefinite matrixes. Only recently, new architectures have been proposed with novel techniques: adapting the Deep Lattice framework to MLPs \citep{yanagisawa2022hierarchical}, working with multiple activations \citep{runje2023constrained}, and constraining the Lipschitz constant \citep{raghu2017expressive}. However, \citet{runje2023constrained} requires the usage of multiple activations, and an a priori split of the layer neurons between them, which might be suboptimal or require additional tuning, and \citet{kitouni2023expressive} relies on very specific activations to control such property, as reported by the authors.
In contrast, our work aims to overcome these drawbacks by improving flexibility without compromising the monotonicity constraint.

\subsection{Soft Monotonicity}
Soft-monotonicity aims to build monotonic MLPs by modifying the training instead of the architecture, either using heuristics or regularizations. Techniques such as point-wise penalty for negative gradients \citep{gupta2019incorporate, sill1996monotonicity} and the use of Mixed Integer Linear Programming (MILP) for certification \citep{liu2020certified} have been proposed. These methods maintain considerable expressive power but do not guarantee monotonicity. Additionally, the computational expense required for certifications, such as those using MILP or Satisfiability Modulo Theories (SMT) solvers, can be prohibitively high.

\section{Monotone MLP}
\label{sec:monotone_mlp}
A function ${f:\mathbb{R}^d\rightarrow \mathbb{R}}$ is said to be monotone non-decreasing with respect to ${x_i}$,  if given ${x_i^0, x_i^1 \in \mathbb{R}}, i \in [1, d] $, has the following property:
\begin{equation}
    \label{eq:monotone_increasing}
    x_i^0 \le x_i^1 \Rightarrow f(x_1,\dots,x_i^0,\dots,x_d) \le f(x_1,\dots,x_i^1,\dots,x_d)
\end{equation}
And similarly, a function ${f:\mathbb{R}^d\rightarrow \mathbb{R}}$ is said to be monotone non-increasing with respect to ${x_i}$ if:
\begin{equation}
    \label{eq:monotone_decreasing}
    x_i^0 \le x_i^1 \Rightarrow f(x_1,\dots,x_i^0,\dots,x_d) \ge f(x_1,\dots,x_i^1,\dots,x_d).
\end{equation}

\begin{remark}
\label{obs:monotone_properties}
    Given ${f(x), g(x)}$ monotonic non-decreasing, and ${h(x), u(x)}$ monotonic non-increasing, ${f \circ g}$ is monotonic non-decreasing, ${f \circ h}$ is monotonic non-increasing, and ${u \circ h}$ is monotonic non-decreasing.
\end{remark}
In this work, we will only focus on parametrizing non-decreasing functions, as the monotonicity can be reversed by simply inverting the sign of the inputs. \footnote{In the rest of the paper we will use ``monotonic" as shorthand for "monotonic non-decreasing", unless otherwise specified}.

An MLP is defined as a parametrized function ${f_\theta}$ obtained as composition of alternating affine transformations ${l\layer{i}_{\theta}}$ and non-linear activations ${\sigma\layer{i}_{\theta}}$: 
\begin{equation}
    \label{eq:mlp_structure}
    f_\theta(x) = l\layer{1}_{\theta} \circ \sigma\layer{1}_{\theta} \dots \sigma\layer{n-1}_{\theta} \circ l\layer{n}_{\theta}.
\end{equation}

A straightforward approach to building a provable monotonic MLP that respects \cref{eq:monotone_increasing} is to impose constraints on its weights and activations,  forcing each term in \cref{eq:mlp_structure} to be monotonic. 
For affine transformations ${l\layer{i}_{\theta}(x) = W\layer{i} x+b\layer{i}}$, we only need to enforce the Jacobian to be non-negative, which is simply the matrix ${W\layer{i}}$, while monotonic activations are, by definition, monotonic.

To optimize the resulting MLP with unconstrained gradient-based approaches, the non-negative weight constraint is obtained using reparametrization, i.e. ${l\layer{i}_{\theta}(x) = g(W\layer{i}) x+b\layer{i}}$ for some differentiable ${g:\R \rightarrow \R_+}$. Typical reparametrizations use absolute value or squaring.

\subsection{Known universal approximation conditions}

Despite their surprising performance, one critical flaw of existing MLP architectures based on weight constraints is the narrow choice of activation functions. Constrained MLPs have been shown to be universal function approximators for monotonic functions, provided the activation is chosen to be the threshold function, and the number of hidden layers is greater than the dimension of the input variable \citep{daniels2010monotone}. Just recently, this result has been drastically improved to a constant bound, proving that four layers are sufficient to have universal approximation properties \citep{mikulincer2022size}, when using the threshold function as activation. Therefore, practical implementations still resort to bounded activations such as sigmoid, tanh, or \text{ReLU6}. 

On the other hand, the use of convex activations like \text{ReLU} in a constrained MLP severely limits the expressivity of the network. To understand why this is the case, consider that:
\begin{proposition}
\label{prop:convex_approx_convex}
    The composition of monotonic convex functions is itself monotonic convex.
\end{proposition}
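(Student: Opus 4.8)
The plan is to separate the two assertions packed into the statement — monotonicity of the composition and convexity of the composition — and handle them in turn. The monotonicity assertion requires no new work: it is precisely the statement that $f \circ g$ is non-decreasing whenever both $f$ and $g$ are non-decreasing, which is already recorded in Remark~\ref{obs:monotone_properties}. The entire content of the proposition therefore reduces to the convexity claim, and the main point I want to emphasize is that the monotonicity of the \emph{outer} function is not optional here: it is exactly the hinge on which the convexity argument turns.

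For convexity I would argue straight from the definition, avoiding any appeal to second derivatives, since the motivating activations (e.g.\ ReLU) are non-differentiable. Writing $h = f \circ g$ with $g$ convex and $f$ convex and non-decreasing, I fix points $x, y$ and a weight $\lambda \in [0,1]$ and chain three inequalities. First, convexity of $g$ gives
\begin{equation}
    g(\lambda x + (1-\lambda) y) \le \lambda g(x) + (1-\lambda) g(y).
\end{equation}
Next, because $f$ is non-decreasing, applying it preserves this inequality, so that $f\bigl(g(\lambda x + (1-\lambda)y)\bigr) \le f\bigl(\lambda g(x) + (1-\lambda) g(y)\bigr)$. Finally, convexity of $f$ bounds the right-hand side by $\lambda f(g(x)) + (1-\lambda) f(g(y))$, and concatenating the three steps yields the convexity inequality for $h$.

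The only step that deserves care is the middle one, which is exactly where monotonicity of $f$ is consumed: without it, applying $f$ to both sides of the convexity inequality for $g$ would not be justified, and indeed the conclusion can genuinely fail (a non-monotone convex outer function composed with a convex inner function need not be convex). If the intended setting is vector-valued — with $g:\R^d\to\R^m$ having convex components and $f:\R^m\to\R$ convex and coordinate-wise non-decreasing — the same three-step chain goes through verbatim once the inner inequality is read coordinate-wise and ``non-decreasing'' is interpreted in each argument; this is the natural reading for the MLP application, where $g$ collects the outputs of one constrained layer and $f$ is a later convex, monotone map. I expect no real obstacle beyond pinning down the monotonicity hypothesis in precisely the form the argument uses.
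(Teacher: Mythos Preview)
Your argument is correct and is the standard proof of this classical fact: monotonicity of the composition is immediate from Remark~\ref{obs:monotone_properties}, and convexity follows from the three-step chain (convexity of $g$, monotonicity of $f$ to preserve the inequality, then convexity of $f$). Your emphasis on the monotonicity of the outer function being essential is also well placed.

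For comparison: the paper states Proposition~\ref{prop:convex_approx_convex} without proof, treating it as a well-known fact. So there is no paper proof to compare against; you have simply supplied the standard argument that the paper omits. Your extension to the vector-valued setting (convex components of $g$, coordinate-wise monotone convex $f$) is the right generalization for the MLP context, though the paper does not spell this out either.
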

Since affine transformations are simultaneously convex and concave, if the activation is chosen to be a monotonic convex function, then the constrained MLP will only be able to approximate monotonic convex functions. 
Despite this clear disadvantage, there is interest in \text{ReLU} activated monotonic MLPs due to their properties and their performances shown in the unconstrained case \citep{glorot2010understanding, hein2019relu}. \citet{runje2023constrained} propose a way to introduce \text{ReLU} activation in the network. The architecture uses multiple activation functions derived from a primitive activation ${\sigma(x)}$, its point reflection ${\sigma'(x)=-\sigma(-x)}$, and, in particular, a bounded sigmoid-like activation. 
While the \text{ReLU} activations are ignored in the subsequent theoretical analysis, this last bounded activation function is used to ensure the universal approximation property. 

An upper-bound on the minimum required number of layers to ensure universal approximation of their architecture is derived from the result in \citet{daniels2010monotone}. Although this bound scales linearly with the number of input dimensions, the authors observe good empirical performance with just a few layers. 
In this work, we extend the result of \citep{mikulincer2022size}, showing that a constant number of layers is indeed sufficient, even when using non-threshold activations. This result explains the empirical observations of \citet{runje2023constrained}, and suggests that one of the three activations employed might not be necessary.

\subsection{Universal approximation theorem for non-threshold activations}
\label{sec:main_theorem}

\begin{figure}[b]
    \begin{center} \centerline{\includegraphics[width=0.9\columnwidth]{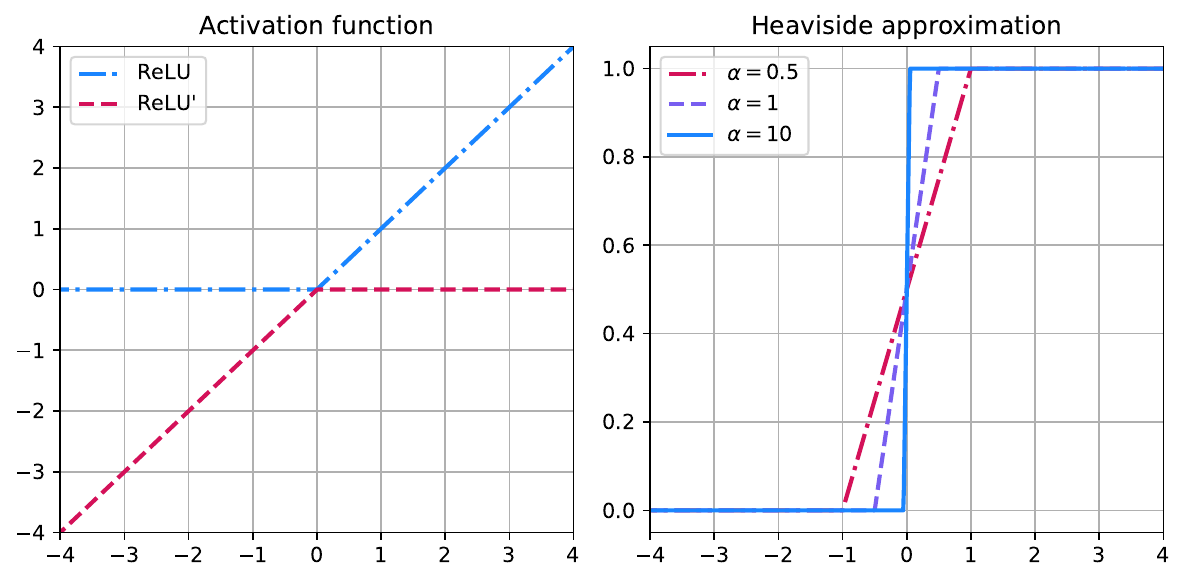}}
        \caption{Constructions of Heavyside function using a composition of \text{ReLU} and its point reflection \text{ReLU}' to obtain
        ${\text{ReLU}(\text{ReLU}'(\alpha x -0.5) + 1) = \text{ReLU}'(\text{ReLU}(\alpha x + 0.5) - 1)}$}
        \label{fig:relu_and_co}
    \end{center}
\end{figure}

From \cref{prop:convex_approx_convex}, it follows that a constrained MLP with \text{ReLU} activations cannot be a universal approximator for monotone functions. However, two MLP layers can approximate the Heavyside function arbitrarily well if the activations alternate between \text{ReLU} and its point reflection, as shown in \cref{fig:relu_and_co}.
From this observation, it is possible to use the result of \cite{daniels2010monotone} directly, showing that alternating activations between \text{ReLU} and its point reflection is sufficient to construct universal monotonic approximators. However, the resulting bound on the required number of layers is linear in the input dimension. 

In \cref{sec:2k_layers_bound} we use similar reasoning and the results of \cite{mikulincer2022size} to achieve a loose but constant bound of 8 layers. However, in this section, we will derive a tighter bound that instead matches and generalizes the result derived in \citet{mikulincer2022size}, while applying to a broader class of activation functions. 
Specifically, we will prove that ${4}$ layers are sufficient, provided that the activations alternate saturation sides.

\begin{definition}
    \label{def:opposite}
    Given a function ${\sigma: \R \rightarrow \R}$ saturates right/left if the corresponding limit exists and is finite. That is, ${\sigma}$ is right-saturating if ${\sigma(+\infty)\coloneqq\lim_{x\rightarrow+\infty}{\sigma(x)}\in \R}$, and it is left-saturating if ${\sigma(-\infty)\coloneqq\lim_{x\rightarrow-\infty}{\sigma(x)}\in \R}$.
    We will denote the set of right-saturating activations as ${\mathcal{S}^+}$ and the set of left-saturating activations as ${\mathcal{S}^-}$.
\end{definition}

\begin{proposition}
    \label{prop:saturate_to_zero_equivalence}
    For every MLP with non-negative weights and activation ${\sigma(x)}$, and for any ${a \in \R_+, b \in \R}$, there exists an equivalent MLP with non-negative weights and activation ${a\sigma(x)+b}$.
\end{proposition}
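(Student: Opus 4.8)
The plan is to exploit the fact that an affine rescaling of the activation can be folded into the neighbouring affine layers, which carry unconstrained biases and whose weight matrices can absorb a \emph{positive} scalar without violating the non-negativity constraint. Writing the source network in the alternating form of \cref{eq:mlp_structure}, every activation $\sigma$ sits sandwiched between two affine maps. I will replace it in place by $\tilde\sigma(x)=a\sigma(x)+b$ and compensate by adjusting the affine map that immediately follows it (in application order), leaving the overall function unchanged.

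First I would invert the defining relation: for $a>0$ we have $\sigma(z)=\tfrac{1}{a}\bigl(\tilde\sigma(z)-b\bigr)=\tfrac{1}{a}\tilde\sigma(z)-\tfrac{b}{a}$. Consider an internal activation whose output feeds the affine layer $l\layer{i}(y)=W\layer{i}y+b\layer{i}$, and let $z$ denote its pre-activation input (acting pointwise, with the constant offset written as $b\,\mathbf{1}$). Substituting the inverted relation gives
\begin{equation}
l\layer{i}\!\Bigl(\tfrac{1}{a}\tilde\sigma(z)-\tfrac{b}{a}\mathbf{1}\Bigr)=\Bigl(\tfrac{1}{a}W\layer{i}\Bigr)\tilde\sigma(z)+\Bigl(b\layer{i}-\tfrac{b}{a}W\layer{i}\mathbf{1}\Bigr).
\end{equation}
Hence the network that uses $\tilde\sigma$ in this layer, with weight matrix $\tfrac{1}{a}W\layer{i}$ and bias $b\layer{i}-\tfrac{b}{a}W\layer{i}\mathbf{1}$, computes exactly the same map as the original. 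Applying this substitution independently to each of the $n-1$ activation layers — each of which has a succeeding affine layer available to absorb the correction — yields an MLP that is functionally identical to the original but uses $\tilde\sigma$ throughout.

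It then remains to verify that the resulting network still satisfies the non-negative weight constraint, which is the only place the hypotheses genuinely enter. The rescaled matrix $\tfrac{1}{a}W\layer{i}$ has non-negative entries precisely because $W\layer{i}\ge 0$ and $\tfrac{1}{a}>0$, while the additive correction lands entirely in the bias, which is unconstrained and so causes no difficulty. I expect this preservation of non-negativity to be the main (and essentially only) subtlety: it is exactly the positivity of $a$ that makes the argument go through, which is why the statement restricts to $a\in\R_+$; the degenerate case $a=0$ collapses $\tilde\sigma$ to a constant and must be excluded. Everything else is bookkeeping, since biases never carry a sign constraint.
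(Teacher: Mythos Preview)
Your argument is correct and is exactly the natural one: absorb the affine rescaling of the activation into the subsequent linear layer, noting that $\tfrac{1}{a}W\layer{i}$ remains entrywise non-negative because $a>0$, while the compensating term lands in the unconstrained bias. The paper states this proposition without proof, so there is no alternative argument to compare against; your write-up is precisely what is needed to fill the gap, and your remark that $a>0$ is essential (since $a=0$ would collapse $\tilde\sigma$ to a constant) is the only place the hypothesis is used.
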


In the following proofs, thanks to \cref{prop:saturate_to_zero_equivalence}, we will only consider activations that saturate to zero.

The main result we will prove is the following.
\begin{theorem}
    \label{thm:universal_approx}
    An MLP ${g_\theta: \mathbb{R}^d \rightarrow \mathbb{R}}$ with non-negative weights and ${3}$ hidden layers can interpolate any monotonic non-decreasing function $f(x)$ on any set of ${n}$ points, provided that the activation functions are monotonic non-decreasing and alternate saturation sides. That is, in addition to monotonicity, either of the following holds: 
    \begin{gather}
        \sigma\layer{1} \in \mathcal{S}^-, \sigma\layer{2} \in \mathcal{S}^+, \sigma\layer{3} \in \mathcal{S}^-\\
        \sigma\layer{1} \in \mathcal{S}^+, \sigma\layer{2} \in \mathcal{S}^-, \sigma\layer{3} \in \mathcal{S}^+
    \end{gather}
\end{theorem}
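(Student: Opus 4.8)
The plan is to turn the threshold construction of \citet{mikulincer2022size} into one driven by alternating one-sided activations \emph{without increasing depth}. First I would invoke \cref{prop:saturate_to_zero_equivalence} to normalize every activation so that its finite limit is $0$, and then reduce to a single alternation pattern: the two patterns are exchanged by the point reflection $\sigma \mapsto -\sigma(-\cdot)$, which swaps $\mathcal{S}^+$ and $\mathcal{S}^-$, combined with a sign flip of the input and output; since negating the input of a non-decreasing target and negating its output yields again a non-decreasing function and leaves the weights non-negative (\cref{obs:monotone_properties}), it suffices to treat the data-flow order $\mathcal{S}^+,\mathcal{S}^-,\mathcal{S}^+$ (i.e.\ $\sigma\layer{3}\in\mathcal{S}^+,\sigma\layer{2}\in\mathcal{S}^-,\sigma\layer{1}\in\mathcal{S}^+$). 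For concreteness I would first carry out the whole argument with the $\mathcal{S}^+$ activation taken to be $\mathrm{ReLU}'(x)=\min(x,0)$ and the $\mathcal{S}^-$ activation taken to be $\mathrm{ReLU}$, and only afterwards pass to general saturating activations.

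Next I would fix the data $(x^{(j)},y^{(j)})_{j=1}^n$, order the points so that the values are non-decreasing, and express the interpolant through its super-level sets as $g=y^{(1)}+\sum_{k=2}^n (y^{(k)}-y^{(k-1)})\,\indicator{x\in U_k}$, where $U_k=\bigcup_{j\ge k}\{x:\ x\ge x^{(j)}\}$ is a nested family of upper sets. Monotonicity of $f$ guarantees that each $U_k$ separates $\{x^{(k)},\dots,x^{(n)}\}$ from $\{x^{(1)},\dots,x^{(k-1)}\}$, so evaluating at a data point gives $\indicator{x^{(m)}\in U_k}=\indicator{k\le m}$ and the sum telescopes to $y^{(m)}$. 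The coefficients $y^{(k)}-y^{(k-1)}$ are non-negative, which is exactly what keeps the final linear layer admissible.

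The core of the proof is to realize the three remaining operations --- the upper-orthant indicators $\indicator{x\ge x^{(j)}}$, the union over orthants, and the weighted sum --- using only three activations and non-negative weights. The key observation is that a conjunction can be absorbed into the \emph{closing} of a clip, so that one opposite-saturation pair computes an entire orthant indicator rather than a single coordinate threshold. Concretely, the first ($\mathcal{S}^+$) activation produces, for each orthant $j$ and coordinate $i$, the feature $p_i^{(j)}=\min(\alpha(x_i-x_i^{(j)}),0)$, using only the non-negative weight $\alpha$ on $x_i$; this is exactly $0$ when $x_i\ge x_i^{(j)}$ and large-negative otherwise. Since every $p_i^{(j)}\le 0$, their sum vanishes precisely when all coordinates are satisfied, so the second ($\mathcal{S}^-$) activation $a_j=\mathrm{ReLU}\big(\sum_i p_i^{(j)}+1\big)$ returns exactly $\indicator{x\ge x^{(j)}}$ on the data once $\alpha$ exceeds the reciprocal of the smallest coordinate gap. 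The union is then a capping, which the third ($\mathcal{S}^+$) activation performs exactly via $\indicator{x\in U_k}=\min\big(\sum_{j\ge k}a_j,1\big)=\mathrm{ReLU}'\big(\sum_{j\ge k}a_j-1\big)+1$, again with non-negative weights, and a final affine layer forms $y^{(1)}+\sum_k (y^{(k)}-y^{(k-1)})\,\indicator{x\in U_k}$ with its non-negative telescoping coefficients.

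Finally I would remove the restriction to $\mathrm{ReLU}$: after the normalization of \cref{prop:saturate_to_zero_equivalence}, every argument fed to an activation ranges over a bounded set on the finite data, so a general monotone one-sided-saturating $\sigma$ can be rescaled to mimic the saturated and the active behaviour of $\mathrm{ReLU}$ and its reflection on that bounded range to arbitrary accuracy, yielding interpolation up to any $\varepsilon$ (and exactly for activations possessing an affine piece). The main obstacle I anticipate is precisely the depth bookkeeping in the crux step: naively substituting each threshold by a clip costs two activations per threshold and overshoots the three-activation budget, so the argument hinges on merging the coordinate-wise thresholding and the subsequent conjunction into a single opposite-saturation pair while (i) staying within non-negative weights --- which forbids forming $x_i^{(j)}-x_i$ directly and forces the $\min(\alpha(x_i-x_i^{(j)}),0)$ encoding --- and (ii) keeping the intermediate values clean enough on the data points that the later capping and telescoping remain exact.
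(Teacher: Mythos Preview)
Your proposal is correct. Both you and the paper follow the same high-level template---order the data by output, build indicators of the super-level sets across the three hidden layers, then telescope in the output layer---but the constructions diverge in the middle. The paper separates every ordered pair $(x_i,x_j)$ by a \emph{single} hyperplane with non-negative normal (\cref{lem:lemma1}) and then applies two successive intersection steps (\cref{lem:lemma2}, invoked once per alternation) to carve out the level sets; the resulting first-layer width is $O(n^2)$. You instead use \emph{axis-aligned} half-spaces, one per coordinate per data point, so that layer~1 has width $O(nd)$; layer~2 performs a conjunction over coordinates to realize the orthant indicator $\indicator{x\ge x^{(j)}}$, and layer~3 performs a disjunction over orthants to realize $U_k$. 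This AND-then-OR structure is closer to the paper's alternative proof in the appendix (intersection-then-union) than to its main proof (intersection-then-intersection-of-complements), though even there the paper keeps general hyperplanes rather than orthants. Your symmetry reduction between the two alternation patterns---point-reflecting every activation while negating input and output---is neater than the paper's approach of simply re-running the construction for the second pattern. The only place your sketch is thinner is the passage from $\mathrm{ReLU}/\mathrm{ReLU}'$ to general saturating activations: the paper absorbs this uniformly into explicit $\lambda\to\infty$ limits inside its two lemmas, whereas your ``rescale to mimic $\mathrm{ReLU}$ on a bounded range'' would need the same limiting bookkeeping spelled out to ensure that the merely-approximate $a_j$ values from layer~2 do not corrupt the capping in layer~3. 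One small imprecision: your claim that $\indicator{x^{(m)}\in U_k}=\indicator{k\le m}$ can fail when output ties occur, but the corresponding telescoping coefficient $y^{(k)}-y^{(k-1)}$ is then zero, so the interpolation identity survives.
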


The first step is proving that hidden units in the first layer can approximate piecewise-constant functions on specific half-spaces.

\begin{lemma}
    \label{lem:lemma1}
    Consider an arbitrary hyperplane defined by ${\alpha^T \left(x - \beta\right)=0}$, ${\alpha \in \R_+^d}$ and ${\beta \in \R^d}$, and the open half-spaces 
    $A^+ = \{ x: \alpha^T \left(x - \beta\right) > 0\}, \; 
    A^- = \{ x: \alpha^T \left(x - \beta\right) < 0\}$.
    The ${i}$-th neuron in the first hidden layer of an MLP with non-negative weights can approximate
    \footnote{Note that ${\sigma\layer{1}(\pm\infty)}$ need not be finite.}:
    $$
    h\layer{1}_i(x) \approx 
    \begin{cases}
        \sigma\layer{1}(+\infty),  & \text{if } x \in A^+\\
        \sigma\layer{1}(-\infty),  & \text{if } x \in A^-\\
        \sigma\layer{1}(0),  & \text{otherwise }   
    \end{cases}
    $$
\end{lemma}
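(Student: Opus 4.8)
The plan is to realize the desired step-like profile with a single affine pre-activation whose weight vector is a positive scalar multiple of $\alpha$. A neuron in the first hidden layer computes $h\layer{1}_i(x) = \sigma\layer{1}\!\left(w^T x + b\right)$ subject to the non-negative constraint $w \in \R_+^d$, so the crucial observation is that the hyperplane normal $\alpha$ already lies in $\R_+^d$. This means I am free to set $w = \lambda\alpha$ for any scale $\lambda > 0$ without violating the constraint, and to choose the bias $b = -\lambda\,\alpha^T\beta$. With this choice the pre-activation collapses to $w^T x + b = \lambda\,\alpha^T(x-\beta)$, a positive rescaling of the signed distance to the hyperplane.

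First I would fix $x$ and read off the three cases directly from the sign of $\alpha^T(x-\beta)$. If $x \in A^+$ then $\alpha^T(x-\beta) > 0$, so the pre-activation $\lambda\,\alpha^T(x-\beta) \to +\infty$ as $\lambda \to \infty$, and by monotonicity of $\sigma\layer{1}$ the output tends to $\sigma\layer{1}(+\infty)$. Symmetrically, if $x \in A^-$ the pre-activation tends to $-\infty$ and the output tends to $\sigma\layer{1}(-\infty)$. If $x$ lies on the hyperplane then $\alpha^T(x-\beta)=0$ exactly, so the pre-activation is identically $0$ for every $\lambda$ and the output equals $\sigma\layer{1}(0)$ with no error at all. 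This is precisely the claimed piecewise-constant behavior.

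Then I would make the word ``approximate'' precise. For any $x$ off the hyperplane the convergence above is monotone in $\lambda$ (since $\alpha^T(x-\beta)$ has a fixed sign and $\sigma\layer{1}$ is non-decreasing), and it is uniform on any compact subset of $A^+$ or $A^-$ bounded away from the hyperplane, because there $\alpha^T(x-\beta)$ is bounded away from $0$. In particular, for the finite interpolation set used in \cref{thm:universal_approx} one can pick a single $\lambda$ large enough to bring the output within any prescribed $\epsilon$ of the target at every point not on the hyperplane, while the on-hyperplane points are matched exactly; taking a maximum over finitely many points makes uniform convergence unnecessary.

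The only genuine subtlety, flagged by the footnote, is that $\sigma\layer{1}(\pm\infty)$ need not be finite on the non-saturating side: for a left-saturating $\sigma\layer{1}\in\mathcal{S}^-$ the limit $\sigma\layer{1}(-\infty)$ is finite but possibly $\sigma\layer{1}(+\infty)=+\infty$. In that regime the statement is to be read with the convention that the output diverges to the corresponding limit, which is exactly what the downstream construction will exploit, with the finite saturating side being the one later pinned to a usable constant. The main thing to get right is therefore bookkeeping rather than computation: confirming that aligning $w$ with $\alpha$ respects the sign constraint, and that the single scale $\lambda$ simultaneously drives both half-spaces to their respective limits while leaving the hyperplane fixed.
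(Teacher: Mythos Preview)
Your proof is correct and follows essentially the same approach as the paper: set $w=\lambda\alpha$ (which respects the non-negativity constraint since $\alpha\in\R_+^d$), choose the bias so the pre-activation equals $\lambda\,\alpha^T(x-\beta)$, and let $\lambda\to\infty$ so the output collapses to $\sigma\layer{1}(\pm\infty)$ or $\sigma\layer{1}(0)$ according to the sign of $\alpha^T(x-\beta)$. Your treatment is in fact slightly more careful than the paper's, which writes $b=\lambda\alpha^T\beta$ (a sign slip) and does not spell out what ``approximate'' means; your extra remarks on monotone/uniform convergence and the unbounded-limit footnote are correct elaborations but not strictly required.
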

\begin{proof}
    Denote by ${w}$ the weights and by ${b}$ the bias associated with the hidden unit under consideration.
    Then, setting the parameters to 
    ${w = \lambda \alpha^T}$ and ${b = \lambda \alpha^T \beta}$ and taking the limit we have that:
    $$
    h\layer{1}_i(x) 
    \approx \lim_{\lambda \rightarrow +\infty} 
        \sigma\layer{1}\left(w x + b \right)
    = \lim_{\lambda \rightarrow +\infty} 
        \sigma\layer{1}\left(
        \lambda \alpha^T \left(x - \beta \right)
        \right)
    $$
    The limit is either ${\sigma\layer{1}(+\infty)}$, ${\sigma\layer{1}(-\infty)}$ or ${\sigma\layer{1}(0)}$ depending on the sign of ${\alpha^T \left(x - \beta \right)}$, proving that:
    $$
    h\layer{1}_i(x) \approx \begin{cases}
        \sigma\layer{1}(+\infty),  & \text{if } \alpha^T \left(x - \beta\right) > 0\\
        \sigma\layer{1}(-\infty),  & \text{if } \alpha^T \left(x - \beta\right) < 0\\
        \sigma\layer{1}(0),  & \text{if } \alpha^T \left(x - \beta\right) = 0\\
    \end{cases}
    $$
\end{proof}

The second step is to prove that one hidden layer can perform intersections of subspaces under specific conditions. In our construction, these will be either half-spaces or intersections of specific half-spaces. 
\begin{lemma}
\label{lem:lemma2}
Consider the intersection ${A=\bigcap^n_{i=0} A_i}$, for ${A_1, \dots, A_n}$ subsets of ${\R^d}$.
For any ${\gamma}$ in the image of ${\sigma\layer{k}}$, a single unit $h\layer{k}_j$ in the ${k}$-th hidden layer of an MLP with non-negative weights can approximate:
$$
h\layer{k}_j(x) \approx \gamma \indicator{A}(x)
$$
provided that ${h\layer{k-1}_i(x) \approx 0}$ for ${x\in A_i}$, and either:
\begin{itemize}
\item ${\sigma\layer{k} \in \mathcal{S}^-}$ and ${h\layer{k-1}_i(x) < 0}$  for ${x\not \in A_i}$
\item ${\sigma\layer{k} \in \mathcal{S}^+}$ and ${h\layer{k-1}_i(x) > 0}$  for ${x\not \in A_i}$
\end{itemize}
\end{lemma}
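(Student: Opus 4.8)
The plan is to realize $h\layer{k}_j$ as a single ``intersection gate'' built from one neuron whose incoming weights are all set to a common large value. Concretely, I would connect this neuron to the previous layer with every weight equal to some $\lambda > 0$ (which respects the non-negativity constraint), and choose the bias $b$ so that $\sigma\layer{k}(b) = \gamma$; such a $b$ exists precisely because $\gamma$ lies in the image of $\sigma\layer{k}$. The pre-activation is then
\[
z_\lambda(x) = \lambda \sum_i h\layer{k-1}_i(x) + b , \qquad h\layer{k}_j(x) = \sigma\layer{k}\!\left(z_\lambda(x)\right),
\]
so the entire argument reduces to controlling the sign and magnitude of the sum $\sum_i h\layer{k-1}_i(x)$ as $\lambda \to +\infty$. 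Recall that by \cref{prop:saturate_to_zero_equivalence} we may assume $\sigma\layer{k}$ saturates to zero on its saturating side.

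Next I would split into the two hypothesized cases. Take first $\sigma\layer{k} \in \mathcal{S}^-$, so $\sigma\layer{k}(-\infty) = 0$. If $x \in A$, then $x \in A_i$ for every $i$, so every term $h\layer{k-1}_i(x) \approx 0$, the sum is negligible, $z_\lambda(x) \approx b$, and hence $h\layer{k}_j(x) \approx \sigma\layer{k}(b) = \gamma$. If instead $x \notin A$, there is at least one index with $x \notin A_{i_0}$, whose term is strictly negative by hypothesis, while every remaining term is either negligible (those $i$ with $x \in A_i$) or likewise negative. The key point is that no term is positive, so there is no cancellation: the sum is negative and bounded away from zero, whence $z_\lambda(x) \to -\infty$ and $h\layer{k}_j(x) \to \sigma\layer{k}(-\infty) = 0$. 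The case $\sigma\layer{k} \in \mathcal{S}^+$ is symmetric, with every ``miss'' contributing a strictly positive term driving $z_\lambda \to +\infty$ and the output to $\sigma\layer{k}(+\infty) = 0$. This is exactly why the saturation side of $\sigma\layer{k}$ must be matched to the sign of $h\layer{k-1}_i$ off $A_i$.

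I expect the main obstacle to be making the two informal ``$\approx 0$'' claims rigorous simultaneously, because they are coupled through $\lambda$. On $A$ we need $z_\lambda(x) \approx b$, i.e. $\lambda \cdot \lvert\sum_i h\layer{k-1}_i(x)\rvert \to 0$; but if even one of the near-zero inputs has the wrong sign, amplifying it by the same large $\lambda$ pushes $z_\lambda$ toward the \emph{non}-saturating side and the output blows up. Meanwhile off $A$ we need $\lambda \cdot \delta \to \infty$, where $\delta>0$ is the fixed magnitude of the strictly-signed contribution. I would resolve this by exploiting that the statement is only required on the finite set of $n$ interpolation points. There the strictly-signed values have a fixed minimum magnitude $\delta$, whereas the previous layer's approximation error $\eta$ is controlled by an independent parameter and can be driven to zero \emph{faster} than $1/\lambda$. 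A diagonal choice with $\lambda\eta \to 0$ and $\lambda\delta \to \infty$ then makes $z_\lambda \approx b$ on $A$ while $z_\lambda \to \mp\infty$ off $A$; equivalently, in the idealized limit where the previous layer attains its saturation value $0$ exactly on each $A_i$ (as produced by \cref{lem:lemma1}), the sum is exactly zero on $A$ and the sign issue disappears. Pinning down this order of quantifiers — refine the previous layer first, then send the amplification $\lambda$ to infinity — is the delicate step that turns the heuristic limit into a genuine $\varepsilon$-approximation.
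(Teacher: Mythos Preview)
Your construction is exactly the paper's: set all incoming weights to a common $\lambda>0$, choose the bias $b$ with $\sigma\layer{k}(b)=\gamma$, and take $\lambda\to+\infty$ so that the sum vanishes on $A$ and is driven to the saturating side off $A$. Your additional discussion of the order-of-limits issue (refine the previous layer's error before amplifying by $\lambda$) is in fact more careful than the paper, which simply writes ``$\approx$'' throughout and leaves this coupling implicit.
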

\begin{proof}
    Denote by ${w}$ the weights and by ${b}$ the bias associated with the hidden unit under consideration.
    Then, setting the weights to ${w = \lambda \mathbf{1}^T}$ and taking the limit we have that:
    \begin{align*}
    h\layer{k}_j(x) &
    \approx \lim_{\lambda \rightarrow +\infty} 
        \sigma\layer{k}\left(w h\layer{k-1}(x) + b \right)\\
    &= \lim_{\lambda \rightarrow +\infty} 
        \sigma\layer{k}\left(
        b + \lambda \sum_{i=0}^n h\layer{k-1}_i(x)
        \right)
    \end{align*}
    Note that in any case, if ${x \in \bigcap_{i=1}^n A_i}$, then  ${\lambda \sum_i h\layer{k-1}_i(x) \approx 0}$, and the limit simply reduces to ${\sigma\layer{k}\left(b\right)}$.
    On the other hand, for ${x \not\in \bigcap_{i=1}^n A_i}$, the limit can be either ${\sigma\layer{k}\left(\pm\infty\right)}$ depending on the sign of ${h\layer{k-1}_i(x)}$. 
    When ${\sigma\layer{k} \in \mathcal{S}^-}$ and ${h\layer{k-1}_i(x)<0}$, the limit is simply ${\sigma\layer{k}\left(-\infty\right)=0}$. 
    Similarly, when ${\sigma\layer{k} \in \mathcal{S}^+}$ and ${h\layer{k-1}_i(x)>0}$ the limit is ${\sigma\layer{k}\left(+\infty\right)=0}$. 
    
    In both cases, for any ${\gamma}$ in the image of ${\sigma\layer{k}}$ we can find a bias value ${b}$ so that:
    $$
    h\layer{k}_j(x) \approx \gamma \indicator{A}(x) =\begin{cases} 
        \sigma\layer{k}\left(b\right)=\gamma,  & \text{if } x \in \bigcap_{i=1}^n A_i,\\
        \sigma\layer{k}\left(\pm\infty\right)=0,  & \text{otherwise}\\
    \end{cases} 
    $$
\end{proof}

Thanks to \cref{lem:lemma1} and \cref{lem:lemma2}, we can now prove the main result, that is \cref{thm:universal_approx}. We will only prove the case ${\sigma\layer{1} \in \mathcal{S}^-}$, ${\sigma\layer{2} \in \mathcal{S}^+}$, ${\sigma\layer{3} \in \mathcal{S}^-}$. The proof for the opposite case follows the same structure and is reported in \cref{sec:complementary_proof}.

\begin{proof}[Proof of \cref{thm:universal_approx}]    
    
Assume, without loss of generality, that the points ${x_1, \dots, x_n}$ are ordered so that ${i < j\implies f(x_{i})\leq f(x_{j})}$, with ties resolved arbitrarly.
We will proceed by construction, layer by layer.

    \paragraph{Layer 1}    
    Since the function to interpolate is monotonic, for any couple of points $x_i$, $x_j$ with ${i < j}$, it is possible to find a hyperplane defined by ${\alpha^T_{j/i} \left(x - \beta_{j/i}\right)=0}$ for some ${\beta_{j/i} \in \R^d}$ and some non-negative normal ${\alpha_{j/i} \in \R_+^d}$, such that ${x_{j}\in A^+_{j/i}, x_{i}\in A^-_{j/i}}$, where $A^+_{j/i}$ and $A^-_{j/i}$ denote the positive and negative half spaces respectively.

    \begin{figure}[h]
        \centering
        \includegraphics[width=0.9\linewidth]{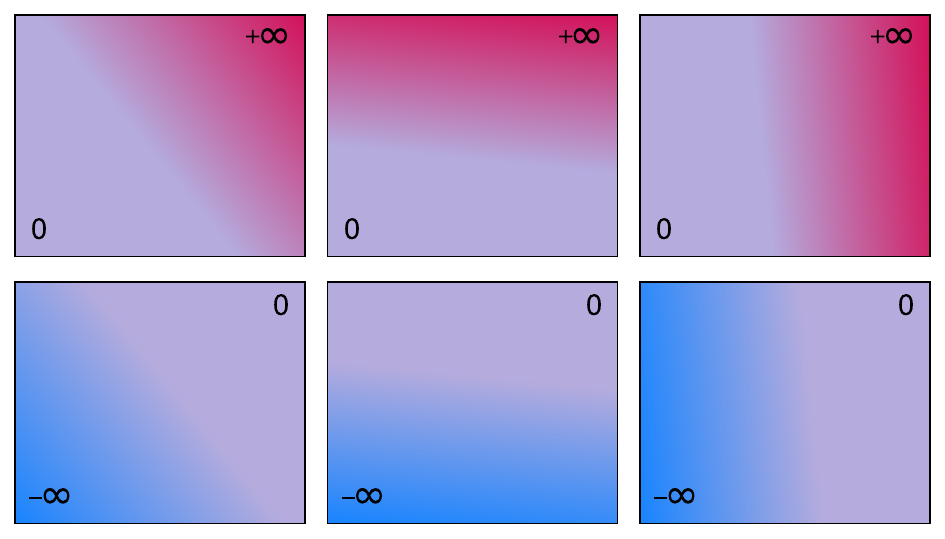} 
        \caption{Example of representable functions at layer $1$.}
        \label{fig:l1}
    \end{figure}

    Using \cref{lem:lemma1}, we can ensure that it is possible to have:
    \begin{equation}
    \label{eq:case1_layer1}
        \begin{cases}
            h\layer{1}_{i}(x) \approx \sigma\layer{1}(-\infty)=0,  & \text{if } x \in A^-_{j/i} \,\,\forall i, j\\
            h\layer{1}_{i}(x) \approx \sigma\layer{1}(+\infty)>0,  & \text{otherwise }\\ 
        \end{cases}
    \end{equation}
    A visual example is shown in \cref{fig:l1}.
    
    \paragraph{Layer 2}    
    Let us construct the set ${A\layer{2}_{i}=\bigcap_{j:j>i} A^-_{j/i}}$.
    Note that the sets ${A\layer{2}_{i}}$ always contain ${x_i}$ and do not contain any ${x_j}$ for ${j>i}$.
    Using \cref{eq:case1_layer1}, we can apply \cref{lem:lemma2}, which ensures that it is possible to have the following\footnote{In this case ${\gamma\layer{2}<0}$ since by assumption ${\sigma\layer{2}}$ saturates right.}:
    \begin{equation}
    \label{eq:case1_layer2}
        \begin{cases}
            h\layer{2}_i(x)  \approx 0,  & \text{if } x \not \in A\layer{2}_{i} \\
            h\layer{2}_i(x)  \approx \gamma\layer{2} < 0,  & \text{otherwise }\\ 
        \end{cases}
    \end{equation}  
    A visual example is shown in \cref{fig:l2}.

    \begin{figure}[h]
        \centering
        \includegraphics[width=0.9\linewidth]{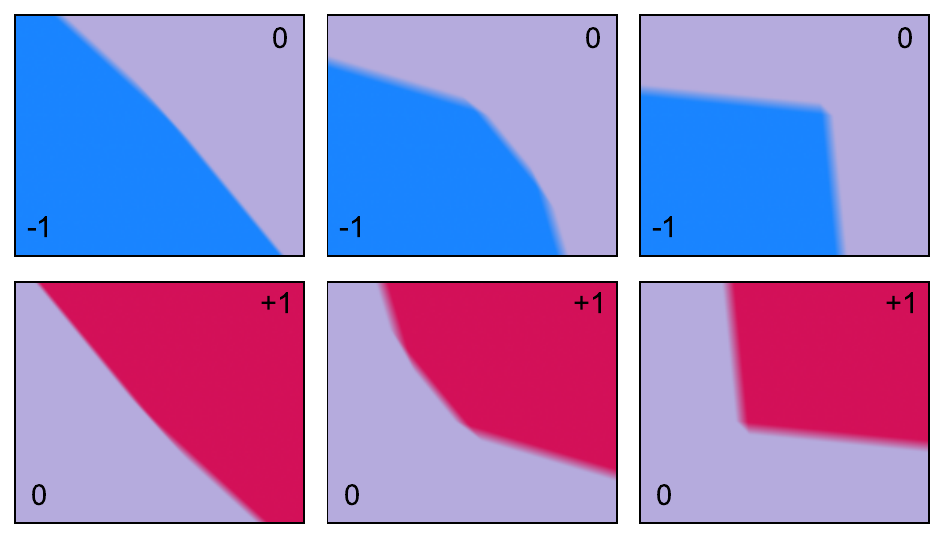}
        \caption{Example of representable functions at layer $2$.}
        \label{fig:l2}
    \end{figure}

    \paragraph{Layer 3}
    Consider ${A\layer{3}_{i}=\bigcap_{j:j<i} \bar{A}\layer{2}_{j}}$, where ${\bar{A}\layer{2}_{j}}$ is the complement of ${A\layer{2}_{j}}$.
    Using \cref{eq:case1_layer2} we can once again apply \cref{lem:lemma2}, which ensures that it is possible to have the following\footnote{In this case ${\gamma\layer{3} > 0}$ since by assumption ${\sigma\layer{3}}$ saturates left.}:
    \begin{equation}
        \label{eq:case1_layer3}
        h\layer{3}_i(x)\approx \gamma\layer{3} \indicator{A\layer{3}_{i}}(x)
    \end{equation} 

    Now, we will show that ${A\layer{3}_{i}}$ represents a level set, i.e. ${x_j \in A\layer{3}_{i} \Longleftrightarrow f(x_j) \ge f(x_i)}$. To do so, consider that ${\bar{A}\layer{3}_{i} = \bigcup_{j:j<i} A\layer{2}_{j}}$.
    Since ${x_j \in A\layer{2}_{j}}$, then ${x_j \in \bar{A}\layer{3}_{i}}$ for ${j<i}$. Similarly since ${x_j}$ is the largest point contained in ${A\layer{2}_{j}}$, ${\bar{A}\layer{3}_{i}}$ cannot contain ${x_i}$ or any point larger than ${x_i}$. This shows that ${A\layer{3}_{i}}$ contains exactly the points ${\{x_j: f(x_j) \ge f(x_i)\}}$.
    A visual example is shown in \cref{fig:l3}.

    \begin{figure}[h]
        \centering
        \includegraphics[width=0.9\linewidth]{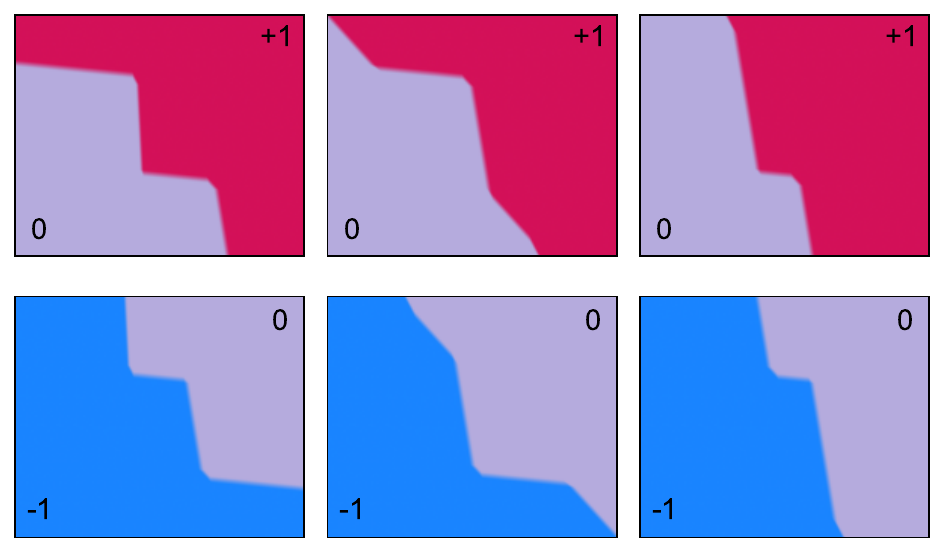}
        \caption{Example of representable functions at layer $3$.}
        \label{fig:l3}
    \end{figure}

    \paragraph{Layer 4} 
    To conclude the proof, simply take the weights at the fourth layer to be :
    $$
    w=\left[
        \frac{f(x_1)-b}{\gamma\layer{3}}, 
        \frac{f(x_2)-f(x_1)}{\gamma\layer{3}}, 
        \dots, 
        \frac{f(x_n)-f(x_{n-1})}{\gamma\layer{3}}
    \right]
    $$ 
    Since the points are ordered, this ensures that ${w}$ contains all non-negative terms, when bias term $b$ is taken to be $b\le f(x_1)$.
    Defining ${f(x_0)=b}$, the output of the MLP can be expressed as:
    \begin{equation}
    \label{eq:case1_layer4}
        g_\theta(x) 
        = w^T h\layer{3}(x) + b
        = b +\sum_{j=1}^n \left(f(x_j)-f(x_{j-1})\right) \indicator{A\layer{3}_{j}}(x)
    \end{equation}    
    Evaluating \cref{eq:case1_layer4} at any of the points ${x_i}$, it reduces to the telescopic sum:
    \begin{equation}
        g_\theta(x_i) 
        = f(x_1)+\sum_{j=2}^i \left(f(x_j)-f(x_{j-1})\right) 
        = f(x_i)
    \end{equation}
    Thus proving that the network correctly interpolates the target function.
\end{proof}

\subsection{Non-positive constrained monotonic MLP}
\label{sec:negative_weights_mlp}
Consider the simple modification of the standard constrained MLP approach described in \cref{eq:mlp_structure}. However, instead of constraining the weights to be non-negative, they are constrained to be non-positive. 
Although this simple modification might seem inconsequential, we will show that this is not the case.
Indeed, we will show that a non-positive constrained MLP satisfies the conditions of \cref{thm:universal_approx}, as long as the activation function saturates on at least one side. This includes convex activations like \text{ReLU}, which provably do not yield universal approximators in the non-negative constrained weight setting.
Note that by \cref{obs:monotone_properties}, an MLP defined according to \cref{eq:mlp_structure} is still monotone for an even number of non-positively constrained layers; therefore, it is still possible to construct provably monotonic networks using non-positive weight constraints. We will only discuss networks with an even number of layers; however, the result is also valid for an odd number of layers.

A first crucial observation is that ${\sigma}$ and ${\sigma'}$ have the same monotonicity, but saturate in opposite directions.

\begin{proposition}
\label{prop:monotonic_point_reflection}
    If ${\sigma(x)}$ is monotonic non-decreasing, then its point reflection ${\sigma'(x)}$ is also monotonic non-decreasing. If ${\sigma(x)}$ saturates, then ${\sigma'(x)}$ also saturates but in the opposite direction.
\end{proposition}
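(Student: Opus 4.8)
The plan is to work directly from the definition of the point reflection, $\sigma'(x) = -\sigma(-x)$, and to split the argument into the two claims: preservation of monotonicity and the flipping of the saturation side.

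For monotonicity I would argue by a double order-reversal. Take any $x_1 \le x_2$; negating reverses the order, so $-x_2 \le -x_1$, and since $\sigma$ is non-decreasing, $\sigma(-x_2) \le \sigma(-x_1)$. Multiplying through by $-1$ reverses the inequality a second time, giving $-\sigma(-x_2) \ge -\sigma(-x_1)$, that is $\sigma'(x_1) \le \sigma'(x_2)$, so $\sigma'$ is non-decreasing. Alternatively, one can read $\sigma'$ as the composition $y \mapsto -y$ applied after $\sigma$ applied after $x \mapsto -x$ and invoke \cref{obs:monotone_properties} twice: a non-increasing map followed by a non-decreasing one is non-increasing, and composing that non-increasing map with a further non-increasing map yields a non-decreasing function. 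I would prefer the one-line direct computation, keeping the compositional version only as a remark.

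For the saturation claim I would treat the two sides symmetrically via the substitution $u = -x$ in the limit. Suppose $\sigma \in \mathcal{S}^+$, so $\sigma(+\infty) = \lim_{x \to +\infty} \sigma(x)$ exists and is finite. Then
$$\lim_{x \to -\infty} \sigma'(x) = \lim_{x \to -\infty} \bigl(-\sigma(-x)\bigr) = -\lim_{u \to +\infty} \sigma(u) = -\sigma(+\infty),$$
which is finite, so $\sigma' \in \mathcal{S}^-$. The mirror argument starting from $\sigma \in \mathcal{S}^-$ gives $\sigma'(+\infty) = -\sigma(-\infty)$ finite, hence $\sigma' \in \mathcal{S}^+$. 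In each case the finite limit migrates to the opposite end of the real line, which is exactly the asserted opposite-direction saturation.

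I do not expect a genuine obstacle: the whole statement reduces to tracking two sign flips together with one substitution in a limit. The only point that must be stated with care is that each sign change — one from negating the input, one from negating the output — reverses an order or limit relation, and the two reversals combine so that monotonicity is \emph{preserved} while the saturating side is \emph{flipped}. Miscounting these reversals (for instance, overlooking one of them) is the single place an error could plausibly enter, so I would make the bookkeeping of the two negations explicit.
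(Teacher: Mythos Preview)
Your proposal is correct. The paper states this proposition without proof, treating it as immediate from the definition $\sigma'(x)=-\sigma(-x)$; your direct argument via the two sign reversals for monotonicity and the substitution $u=-x$ for the saturation limits is exactly the computation one would supply, and there is nothing further to compare.
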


From \cref{prop:monotonic_point_reflection} we can obtain an immediate corrollary of \cref{thm:universal_approx} which will prove useful:
\begin{proposition}
\label{prop:univ_approx_point_reflection}
An MLP with at least ${4}$ layers, non-negative weights, and alternating activation ${\sigma}$ and ${\sigma'}$  is a universal monotonic approximator, provided that ${\sigma}$ saturates on at least one side.
\end{proposition}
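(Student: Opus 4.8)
The plan is to obtain this as a direct corollary of \cref{thm:universal_approx}, with essentially all the work going into checking that alternating $\sigma$ and its point reflection $\sigma'$ yields a sequence of three hidden activations whose saturation sides match one of the two admissible patterns in that theorem. First I would invoke \cref{prop:monotonic_point_reflection}: since $\sigma$ is monotonic non-decreasing and saturates on at least one side, $\sigma'$ is likewise monotonic non-decreasing and saturates on the opposite side. Concretely, if $\sigma \in \mathcal{S}^+$ then $\sigma' \in \mathcal{S}^-$, and if $\sigma \in \mathcal{S}^-$ then $\sigma' \in \mathcal{S}^+$; in the degenerate case where $\sigma$ saturates on both sides (e.g.\ a sigmoid) so does $\sigma'$, and the argument only becomes easier. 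In every case the unordered pair $\{\sigma, \sigma'\}$ contains one right-saturating and one left-saturating member, both monotonic non-decreasing, and all weights remain non-negative.

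Next I would fix the ordering. A four-layer MLP has three hidden activations, so alternating the two functions produces either the sequence $(\sigma, \sigma', \sigma)$ or $(\sigma', \sigma, \sigma')$. Because one member of the pair lies in $\mathcal{S}^+$ and the other in $\mathcal{S}^-$, these two orderings realize precisely the saturation patterns $(\mathcal{S}^+, \mathcal{S}^-, \mathcal{S}^+)$ and $(\mathcal{S}^-, \mathcal{S}^+, \mathcal{S}^-)$ demanded by \cref{thm:universal_approx}. Choosing the ordering that begins with the left-saturating member of $\{\sigma, \sigma'\}$ yields the pattern $(\mathcal{S}^-, \mathcal{S}^+, \mathcal{S}^-)$, and the other ordering yields the complementary one; either way, one of the two admissible alternations is available. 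With both activations monotonic non-decreasing, the weights non-negative, and the saturation-alternation hypothesis satisfied, \cref{thm:universal_approx} applies verbatim and guarantees that the four-layer network interpolates any monotonic non-decreasing target on any finite set of $n$ points.

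It then remains to pass from finite interpolation to genuine universal approximation and to extend from exactly four layers to ``at least four''. For the former, I would use that the interpolating network produced above is itself monotonic non-decreasing (non-negative weights composed with monotonic activations, via \cref{obs:monotone_properties}); interpolating a continuous monotonic target on a sufficiently fine grid of a compact domain then forces uniform closeness between the two monotone functions on that domain. For deeper networks, I would keep the four-layer interpolating construction intact and let any additional affine–activation blocks act as approximate identities on the bounded region of interest: with non-negative weights this is achievable because the relevant activations behave linearly on a half-line (for $\text{ReLU}$, shifting the preactivations into the linear regime with the unconstrained biases and undoing the shift in the next affine map), so the approximation is preserved.

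I do not expect a deep obstacle here, since the statement is flagged as an immediate corollary. The two points that genuinely need care are the bookkeeping of the saturation-side pattern — namely arguing that the \emph{unordered} pair $\{\sigma, \sigma'\}$ necessarily supplies one element of $\mathcal{S}^+$ and one of $\mathcal{S}^-$, so that a valid alternation always exists regardless of which side $\sigma$ saturates on — and the standard but nontrivial upgrade from pointwise interpolation to uniform approximation of continuous monotonic functions on compacts, together with the identity-emulation argument that lets the result survive the addition of further layers.
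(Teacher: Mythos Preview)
Your approach is essentially the paper's: it states this proposition as an ``immediate corollary'' of \cref{thm:universal_approx} via \cref{prop:monotonic_point_reflection}, and your saturation-side bookkeeping is exactly the content of that derivation. You go further than the paper by spelling out the interpolation-to-uniform-approximation step and the extension to more than four layers; the former is fine, but your identity-emulation argument for extra layers is stated only for \text{ReLU}-like activations with a linear half-line, so for a general saturating $\sigma$ you would need a slightly different construction (e.g., absorbing redundant layers by composing with near-constant maps on the saturating side, or simply noting that a wider class of depths is covered once one has both alternation orders available).
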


The second observation is that imposing non-positive constraints in two adjacent layers with an activation function in between is equivalent to imposing non-negative constraints in the two layers and using a point-reflected activation function between them.

\begin{proposition}
    \label{prop:flip}
    An MLP with ${W\layer{k}\le0}$, ${W\layer{k+1}\le0}$ and ${\sigma\layer{k}(x)=\sigma(x)}$, is equivalent to an MLP with ${W\layer{k}\ge0}$, ${W\layer{k+1}\ge0}$ and ${\sigma\layer{k}(x)={\sigma'}(x)=-\sigma(-x)}$.
\end{proposition}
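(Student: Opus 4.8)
The plan is to prove that the two networks compute the identical input--output map through a purely local reparametrization of the two affected layers, leaving every other part of the MLP unchanged. First I would isolate the sub-computation carried out by the affine map $W\layer{k}$, the activation $\sigma\layer{k}=\sigma$, and the affine map $W\layer{k+1}$: on an input $u$ to layer $k$, this block outputs $W\layer{k+1}\,\sigma\!\left(W\layer{k} u + b\layer{k}\right) + b\layer{k+1}$, and the goal is to reproduce exactly this value using non-negative weights together with the point reflection $\sigma'(x)=-\sigma(-x)$.

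The core step is to define the reparametrized block by flipping the signs of $W\layer{k}$, $b\layer{k}$, and $W\layer{k+1}$, while leaving the outer bias $b\layer{k+1}$ untouched; concretely I would replace the inner affine map by $-W\layer{k} u - b\layer{k}$ and the outer weight by $-W\layer{k+1}$. Substituting into the definition of the point reflection, the inner pre-activation argument is negated, so $\sigma'\!\left(-W\layer{k} u - b\layer{k}\right) = -\sigma\!\left(-(-W\layer{k} u - b\layer{k})\right) = -\sigma\!\left(W\layer{k} u + b\layer{k}\right)$; that is, the point-reflected unit computes exactly the negation of the original activation output.

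Finishing the argument is then just verifying that the two sign flips cancel: the negated outer weight $-W\layer{k+1}$ multiplied by the negated activation output recovers the term $W\layer{k+1}\,\sigma\!\left(W\layer{k} u + b\layer{k}\right)$, and adding the unchanged $b\layer{k+1}$ reproduces the original block output. Finally I would check the constraint directions: since $W\layer{k}\le 0$ and $W\layer{k+1}\le 0$ in the original network, their negations $-W\layer{k}$ and $-W\layer{k+1}$ are non-negative, so the reparametrized network indeed satisfies the non-negative weight constraint asserted in the statement.

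There is no serious obstacle here; the entire content is sign bookkeeping. The one point that requires care --- and the only place a naive attempt goes astray --- is the asymmetric treatment of the biases: the inner bias $b\layer{k}$ must be flipped so that the negation can be pulled cleanly through $\sigma'$, whereas the outer bias $b\layer{k+1}$ must be left alone, since it is added after the second (re-negated) weight and is therefore unaffected by the reflection.
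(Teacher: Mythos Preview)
Your argument is correct: the local reparametrization $(W\layer{k},b\layer{k},W\layer{k+1})\mapsto(-W\layer{k},-b\layer{k},-W\layer{k+1})$ together with $\sigma\mapsto\sigma'$ reproduces the original block output exactly, and the sign constraints transfer as claimed. The paper states this proposition without proof, treating it as elementary sign bookkeeping, so your write-up simply fills in the details the paper omitted; your explicit remark about the asymmetric treatment of $b\layer{k}$ versus $b\layer{k+1}$ is a useful clarification.
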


From this, it follows that an MLP with an even number of layers, non-positive weights, and activation ${\sigma}$ at all layers is equivalent to an MLP with non-negative weights that alternate activations between ${\sigma'}$ and ${\sigma}$. This equivalence can be achieved using \cref{prop:flip} by ``flipping'' the weight constraints two layers at a time, which also changes the activations at odd-numbered layers from ${\sigma}$ to ${\sigma'}$.

Thanks to \cref{prop:flip}, this also shows that:
\begin{proposition}
    If ${\sigma \in \mathcal{S}^- \cup \mathcal{S}^+}$, an MLP with ${4}$ layers, non-positive weights and activation ${\sigma}$, is a universal approximator for the class of monotonic functions.  
\end{proposition}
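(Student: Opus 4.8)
The plan is to reduce the non-positive-weight network to an equivalent non-negative-weight network with alternating activations, and then invoke \cref{prop:univ_approx_point_reflection}. I start from a $4$-layer MLP with weight matrices $W\layer{1},\dots,W\layer{4}\le 0$ and the same activation $\sigma$ placed after each of the first three affine maps. Since $4$ is even, the affine maps split cleanly into the two non-overlapping consecutive pairs $(W\layer{1},W\layer{2})$ and $(W\layer{3},W\layer{4})$, with the first activation sitting between the first pair, the third activation between the second pair, and the middle (second) activation lying between the two pairs and hence untouched by either pairing.

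Next I would apply \cref{prop:flip} to each pair independently. Flipping $(W\layer{1},W\layer{2})$ turns both matrices non-negative and replaces the first activation $\sigma$ by its point reflection $\sigma'$; flipping $(W\layer{3},W\layer{4})$ does the same for the last pair and the third activation. Because the two pairs share no weight matrix, the two applications of \cref{prop:flip} do not interfere (the sign and bias adjustments of one flip act on matrices disjoint from the other), and the net effect is an equivalent network with $W\layer{1},\dots,W\layer{4}\ge 0$ and activation pattern $\sigma',\sigma,\sigma'$. The middle activation remains $\sigma$ precisely because it lies outside both flipped pairs --- this is exactly the bookkeeping that only closes up for an even number of layers.

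It then remains to check that this non-negative network satisfies the hypotheses of \cref{prop:univ_approx_point_reflection} (equivalently \cref{thm:universal_approx}). Monotonicity is immediate: $\sigma$ is monotonic non-decreasing by assumption and $\sigma'$ is as well by \cref{prop:monotonic_point_reflection}. For the saturation-side alternation I would split into the two cases $\sigma\in\mathcal{S}^-$ and $\sigma\in\mathcal{S}^+$; in either case \cref{prop:monotonic_point_reflection} guarantees $\sigma'$ saturates on the opposite side, so the pattern $\sigma',\sigma,\sigma'$ realizes one of the two admissible alternations $(\mathcal{S}^+,\mathcal{S}^-,\mathcal{S}^+)$ or $(\mathcal{S}^-,\mathcal{S}^+,\mathcal{S}^-)$ required by \cref{thm:universal_approx}. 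Applying \cref{prop:univ_approx_point_reflection} shows the non-negative network is a universal monotonic approximator, and since it is equivalent to the original non-positive network, the latter is too.

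The only genuinely delicate point is the pairing/flip bookkeeping: one must verify that \cref{prop:flip} applies to the two disjoint layer pairs simultaneously without the adjustments of one flip disturbing the other, and that the resulting activation sequence is a legitimate alternation for \cref{thm:universal_approx} rather than, say, $\sigma',\sigma',\sigma$. Everything else is a direct citation of the already-established propositions.
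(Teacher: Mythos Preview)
Your proposal is correct and takes essentially the same approach as the paper: apply \cref{prop:flip} to the two disjoint consecutive pairs $(W\layer{1},W\layer{2})$ and $(W\layer{3},W\layer{4})$ to obtain an equivalent non-negative network with activation pattern $\sigma',\sigma,\sigma'$, then invoke \cref{prop:univ_approx_point_reflection} (or directly \cref{thm:universal_approx}). The paper states this argument in a single sentence (``flipping the weight constraints two layers at a time, which also changes the activations at odd-numbered layers from $\sigma$ to $\sigma'$''), while you spell out the pairing bookkeeping and the saturation-side case split explicitly; the substance is identical.
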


Similarly, we can apply the observations of this section to show that the structure proposed in \citet{runje2023constrained} can produce universal monotonic approximators using only point reflections without the need for the third activation class. 

Although using \cref{thm:universal_approx} allows us to prove that a broad class of constrained MLP architectures are universal monotonic approximators, it does not necessarily translate into MLPs that are easily optimizable. 
Consider the computation for an arbitrary input ${x}$ and an MLP with \text{ReLU} activation and biases initialized to zero. If ${x \ge 0}$, then ${-|W|x \le 0}$, and thus in the first layer ${\text{ReLU}(-|W|x) = 0}$. However, if ${x \le 0}$, then ${\text{ReLU}(-|W|x) \ge 0}$ and now the second layer will saturate instead. To allow for efficient and effective optimization, we must carefully tune the initialization of the bias term to avoid having ${0}$ gradient everywhere. Broadly speaking, the weight constraint makes the networks more sensitive to initialization. 

\section{Addressing the weight constraint}
\label{sec:relax_weight_constraint}

Historically, the first works that proposed a monotonic neural network formulation relied on forcing the parameters of the network to be non-negative, specifically the matrices ${W}$ in the affine transformations, combined with bounded monotonic activations, is a sufficient condition to guarantee that the overall function is monotonic \citep{daniels2010monotone, sill1996monotonicity}. Recently, \citet{runje2023constrained} showed a way to build effective monotonic MLPs with such a technique by exploiting multiple activations.
However, even though the use of constrained weights and bounded activation is easy to implement and can be optimized with any unconstrained gradient optimizer, poor initialization could lead to vanishing gradient dynamics, as further explored in \cref{sec:vanishing_gradient} and \cref{sec:toy_example}. Instead, we will show how to address this issue while also tackling the necessity of alternating the activation saturation to have universal approximation capabilities.

\subsection{Relaxing weight constraints with activation switches}
\label{sec:switch_activation_mlp}
\begin{algorithm}[t]
    \caption{Forward pass of a Monotonic MLP with post-activation switch}
    \label{alg:mlp_forward_post}
    \begin{algorithmic}
        \STATE {\bfseries Input:} data ${x \in \mathbb{R}^d}$, weight matrix ${W \in \mathbb{R}^{d \times d'}}$, bias vectors ${b \in \mathbb{R}^{d'}}$, monotonic activation function ${\sigma}$ 
        \STATE {\bfseries Output:} prediction ${\hat{y} \in \mathbb{R}^{d'}}$
        
        \STATE ${W^{+} := \max(W, 0)}$
        \STATE ${W^{-} := \min(W, 0)}$
        \STATE ${z^{+} := W^{+} \sigma(x) }$
        \STATE ${z^{-} := W^{-} \sigma(-x)}$
        \STATE ${\hat{y} := z^{+} + z^{-} + b}$
        
    \end{algorithmic}
\end{algorithm}
\begin{figure}[t]
        \centering
        \includegraphics[width=1.03\linewidth]{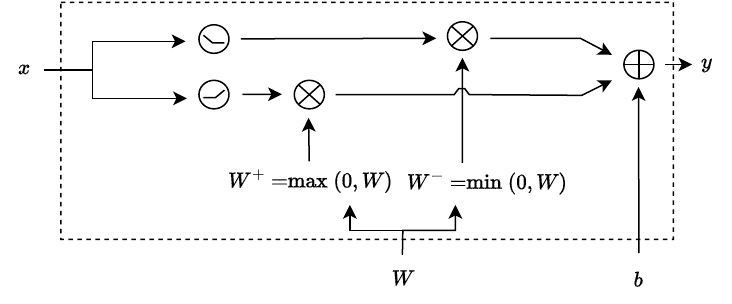}
     \caption{Computation graph of a single layer of a monotonic NN with the proposed learned activation via weight sign. 
     }
    \label{fig:side_by_side_comparison}
\end{figure}

Assuming that we used the weight-constrained formulation for the construction proposed in \cref{sec:negative_weights_mlp}, we would still be left to decide the sequence of activations that should be used for the MLP, which might be unclear or necessitate further hyperparameter tuning.
However, by slightly rearranging the order of operations, it is possible to construct a monotone MLP that does not require manual tuning of the activation saturation side, while, at the same time, relaxing the weight constraint.

Let us thus consider a single layer $f(x)={\sigma(|W| x + b)}$ in a constrained MLP, that uses the absolute value for weight reparametrization. Instead of constraining weights, we can separate ${W}$ into its positive and negative parts ${W^+= \max(W,0)}$ and ${W^-=\min(W,0)}$. This allows us to express the affine transformation as 
\begin{equation}
    |W|x + b= W^+ x - W^- x + b. \label{eq:abssplit}
\end{equation}
Applying the non-linearity to each term of \cref{eq:abssplit} individually instead of applying it to $|W|x$, and sharing the bias term, leads to the parametrization:
\begin{equation}
\label{eq:pre_activation_switch}
\hat{f}(x) = \sigma(W^+ x + b) - \sigma(W^- x + b). 
\end{equation} 

\begin{proposition}
\label{prop:both_cases}
    Any function representable using an affine transformation with non-negative weights followed by either $\sigma$ or $\sigma'$ can also be represented using \cref{eq:pre_activation_switch}, up to a constant factor.
\end{proposition}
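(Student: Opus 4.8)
The plan is to match each of the two target forms separately by choosing the unconstrained matrix $W$ in \cref{eq:pre_activation_switch} to be sign-definite. The single observation driving everything is that $W^+=\max(W,0)$ and $W^-=\min(W,0)$ have disjoint entrywise support: if $W\ge 0$ then $W^-=0$, and if $W\le 0$ then $W^+=0$. Consequently, taking $W$ to have a definite sign collapses one of the two $\sigma$-terms in \cref{eq:pre_activation_switch} into the \emph{constant} $\sigma(b)$, leaving a single active branch that we can align with the target. Let the target layer be $\sigma(Vx+c)$ or $\sigma'(Vx+c)$ with $V\in\R_+^{d'\times d}$, and note that in \cref{eq:pre_activation_switch} both $W$ and $b$ are free to choose.

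For the first case, I would set $W=V\ge 0$, so that $W^+=V$ and $W^-=0$. Then \cref{eq:pre_activation_switch} becomes $\hat f(x)=\sigma(Vx+b)-\sigma(b)$, and choosing $b=c$ gives $\hat f(x)=\sigma(Vx+c)-\sigma(c)$, i.e.\ exactly the target up to the additive constant $-\sigma(c)$. For the second case, I would instead set $W=-V\le 0$, so that $W^+=0$ and $W^-=-V$; now the first branch degenerates to $\sigma(b)$ and \cref{eq:pre_activation_switch} reads $\hat f(x)=\sigma(b)-\sigma(-Vx+b)$. Choosing $b=-c$ and invoking the definition $\sigma'(t)=-\sigma(-t)$, this equals $\sigma(-c)-\sigma(-Vx-c)=\sigma'(Vx+c)+\sigma(-c)$, again the target up to the additive constant $\sigma(-c)$.

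The only subtlety worth flagging—what plays the role of the "main obstacle"—is the shared bias $b$ across the two branches of \cref{eq:pre_activation_switch}, which seems to couple the positive and negative parts. The resolution is precisely the disjoint-support structure above: by making $W$ sign-definite, one branch becomes independent of $x$ and contributes only the evaluation $\sigma(b)$, so the shared bias does not obstruct the construction but merely produces the leftover additive constant $\sigma(\pm c)$. This constant is exactly the "up to a constant" in the statement, and it is immaterial in an MLP since it is absorbed into the bias of the subsequent affine layer. I would close by remarking that both sign choices keep $V\ge 0$, so the construction is consistent with the monotone parametrization and covers both $\sigma$ and $\sigma'$, establishing the claim.
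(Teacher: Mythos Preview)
Your proposal is correct and follows essentially the same approach as the paper: make $W$ sign-definite so that one branch of \cref{eq:pre_activation_switch} collapses to the constant $\pm\sigma(b)$, then identify the surviving branch with $\sigma(|W|x+b)$ or $\sigma'(|W|x-b)$ via $\sigma'(t)=-\sigma(-t)$. Your write-up is a bit more explicit about the bias choice and the resulting additive constant, but the underlying argument is identical.
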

\begin{proof}
    When all entries of $W$ have the same sign, one of the two terms in \cref{eq:pre_activation_switch} collapses to ${\pm\sigma(b)}$. 
    Specifically, when $W\ge 0$, the expression reduces to ${\sigma(|W| x + b)-\sigma(b)}$, while when $W\le0$ it reduces to ${\sigma(b)-\sigma(-|W| x + b)}$ instead. To conclude the proof recall that ${-\sigma(-x) = \sigma'(x)}$.
\end{proof}
 The additional constant factor can be accounted for in the bias term of the following layer. Therefore, \cref{prop:both_cases} covers both cases employed in \cref{prop:univ_approx_point_reflection}.
 This shows that an MLP obtained by stacking at least $4$ blocks parametrized as \cref{eq:pre_activation_switch} is a universal approximator for monotonic functions. 
Hence, the proposed formulation is more expressive compared to a simple weight constraint, given that they are only a special case of \cref{eq:pre_activation_switch}.

Alternatively, one could apply similar reasoning working backwards from the last layer of the network, which would lead to an alternative formulation, given by \begin{equation}
\label{eq:post_activation_switch}
\hat{f}(x) = W^+ \sigma(x) + W^- \sigma(-x) + b.
\end{equation}
 We will refer to \cref{eq:pre_activation_switch} and to \cref{eq:post_activation_switch} as pre-activation switch and post-activation switch, respectively.

In \cref{fig:side_by_side_comparison}, we report only the post-activation switch's pseudocode and computation graph since it will be the formulation that will also be employed for the experimental part of the paper. The pre-activation corresponding algorithm and computational graph can be found in \cref{sec:pre_and_post_algorithm}.

Indeed, the simplicity of the approach can be appreciated: it shares most of the steps of the forward pass of a traditional MLP due to the relaxation of the weight constraint and does not require additional special care for initializations. In \cref{sec:vanishing_gradient} we provide additional details on the optimization properties of the proposed formulation. For all experiments, the default PyTorch \cite{paszke2019pytorch} initialization was used without the need for additional tuning.
A naive implementation can be achieved using a second matrix multiplication. This additional operation can be easily parallelized and does not require additional data transfers. For the networks tested, we did not observe any overhead in practice. 

\section{Experiments}
\label{sec:experiments}

\begin{table*}[t]
    \caption{Test metrics across different datasets. The best-performing architecture per dataset is bolded.}
    \label{tab:combined_test}
    \small
    \begin{center}
        {\renewcommand{\arraystretch}{1.5}
        \begin{tabular}{l|c|c|c|c|c}
                Method & \makecell{COMPAS \\ (Test Accuracy)} & \makecell{Blog Feedback \\ (Test RMSE)} & \makecell{Loan Defaulter \\ (Test Accuracy)} & \makecell{AutoMPG \\ (Test MSE)} & \makecell{Heart Disease \\ (Test Accuracy)} \\
                \hline\hline
                XGBoost & ${68.5\% \pm 0.1\%}$ & ${0.176 \pm 0.005}$ & ${63.7\% \pm 0.1\%}$ & - & - \\
                \hline
                
                Certified & ${68.8\% \pm 0.2\%}$ & ${0.159 \pm 0.001}$ & ${65.2\% \pm 0.1\%}$ & - & - \\
                \hline
                Non-Neg-DNN & ${69.3\% \pm 0.1\%}$ & ${0.154 \pm 0.001}$ & ${65.2\% \pm 0.1\%}$ & ${10.31 \pm 1.86}$ & ${89\% \pm 1\%}$ \\
                \hline
                DLN & ${67.9\% \pm 0.3\%}$ & ${0.161 \pm 0.001}$ & ${65.1\% \pm 0.2\%}$ & ${13.34 \pm 2.42}$ & ${86\% \pm 2\%}$ \\
                \hline
                Min-Max Net & ${67.8\% \pm 0.1\%}$ & ${ 0.163 \pm 0.001}$ & ${64.9\% \pm 0.1\%}$ & ${10.14 \pm 1.54}$ & ${75\% \pm 4\%}$ \\
                \hline
                Constrained MNN & ${69.2\% \pm 0.2\%}$ & ${0.154 \pm 0.001}$ & ${65.3\% \pm 0.1\%}$ & ${8.37 \pm 0.08}$ & ${89\% \pm 0\%}$ \\
                \hline
                Scalable MNN & ${69.3\% \pm 0.9\%}$ & ${0.150 \pm 0.001}$ & ${65.0\% \pm 0.1\%}$ & ${7.44 \pm 1.20}$ & ${88\% \pm 4\%}$ \\
                \hline
                Expressive MNN & ${69.3\% \pm 0.1\%}$ & ${0.160 \pm 0.001}$ & ${\mathbf{65.4\% \pm 0.1\%}}$ & ${7.58 \pm 1.20}$ & ${90\% \pm 2\%}$ \\
                \hline\hline
                \textbf{Ours} & ${\mathbf{69.5\% \pm 0.1\%}}$ & ${\mathbf{0.149 \pm 0.001}}$ & ${\mathbf{65.4\% \pm 0.1\%}}$ & ${\mathbf{7.34 \pm 0.46}}$ & ${\mathbf{94\% \pm 1\%}}$ \\
            \end{tabular}
            \quad}
    \end{center}
\end{table*}

In this section, we aim to analyze the method's performance compared to other alternatives that give monotonic guarantees.
The first dataset used is COMPAS \citep{fabris2022algorithmic}. COMPAS is a dataset comprised of 13 features, 4 of which have a monotonic dependency on the classification. A second classification dataset considered is the Heart Disease dataset. It consists of 13 features, 2 of which are monotonic with respect to the output. Lastly, we also test our method on the Loan Defaulter dataset, comprised of 28 features, 5 of which have a monotonic dependency on the prediction. To test on a regression task, we use the AutoMPG dataset, comprised of 7 features, 3 of which are monotonically decreasing with respect to the output. A second dataset for regression is the Blog Feedback dataset \citep{buza2013feedback}.
Contrary to all other datasets, this dataset is composed of a very small portion of monotonic covariates. In fact, the data set consists of 280 features, of which only 8 are monotonic with respect to the output, accounting for ${2.8\%}$ of the total. 

We compare our method with several other approaches that give monotonic guarantees by construction. In particular, we compare it to XGBoost\citep{chen2016xgboost} Deep Lattice Network \citep{you2017deep}, Min-Max Networks \citep{daniels2010monotone}, Certified Networks \citep{liu2020certified}, COMET \citep{sivaraman2020counterexample}, Constrained Monotonic Neural Networks \citep{runje2023constrained}, Expressive Monotonic Neural Network \citep{kitouni2023expressive}, and Scalable Monotonic Neural Networks \citep{kim2024scalable}. With Non-Neg-DNN we refer to a naive constrained monotonic MLP using sigmoid activations. Similar results are reported in \cite{liu2020certified}, though in a narrower set of datasets. In \cref{tab:combined_test}, we report the final test set metrics.
For the proposed method, little to no hyperparameter tuning has been performed, as the hyperparameters found by \cite{runje2023constrained} worked without the need for further tuning.
The proposed method matches or exceeds the performance of all other recently proposed approaches.

\section{Conclusions and future works}
In this work, we have relaxed the requirements to achieve universal approximation in monotonic MLPs with constrained weights.
We proved that alternating saturation side in the activations is a sufficient condition to achieve this property with a finite number of layers. 
In addition, we show a connection between the saturation side of the activations and the sign of the weight constraint. This allows us to show that the non-positive weight constraint is, surprisingly, more expressive than the non-negative one, which can only represent convex functions.
We then use this theoretical analysis to construct a novel parameterization that relaxes the weight constraint, making the network less sensitive to initialization.
The activation saturation side is learnable, which ensures universal approximation capabilities even when using monotonic convex activation, which was previously not possible.
MLPs built with our fully connected, monotone layer achieve state-of-the-art performance.

Although this work proves that any monotonic saturating activation can be used to build monotonic MLPs, it is still an open question whether non-saturating activations, such as Leaky-\text{ReLU}, can be used to build monotonic MLPs. 
Furthermore, batch normalization has proven to be highly effective in the unconstrained case. Still, it has never been used as a possible solution to the initialization problem for the monotonic case.

\section{Ethical Considerations}

The use of the COMPAS dataset in this research acknowledges its status as a common benchmark within the field of machine learning fairness studies \citep{angwin2022machine, dressel2018accuracy}. Recognizing the complexities and potential ethical challenges associated with such datasets, we emphasize a commitment to responsible research practices. We prioritize transparency and ethical rigor throughout our study to ensure that the methodologies employed and the conclusions drawn contribute constructively to the ongoing discourse in AI ethics and fairness. This approach underlines our dedication to advancing machine learning applications in a manner that is conscious of their broader societal impacts.


\section*{Impact Statement}
This paper presents work whose goal is to advance the field of Machine Learning. There are many potential societal consequences of our work, none of which we feel must be specifically highlighted here.

\bibliography{references}

\begin{thebibliography}{45}
\providecommand{\natexlab}[1]{#1}
\providecommand{\url}[1]{\texttt{#1}}
\expandafter\ifx\csname urlstyle\endcsname\relax
  \providecommand{\doi}[1]{doi: #1}\else
  \providecommand{\doi}{doi: \begingroup \urlstyle{rm}\Url}\fi

\bibitem[Angwin et~al.(2022)Angwin, Larson, Mattu, and
  Kirchner]{angwin2022machine}
Angwin, J., Larson, J., Mattu, S., and Kirchner, L.
\newblock Machine bias.
\newblock In \emph{Ethics of data and analytics}, pp.\  254--264. Auerbach
  Publications, 2022.

\bibitem[Barron(2017)]{barron2017continuously}
Barron, J.~T.
\newblock Continuously differentiable exponential linear units.
\newblock \emph{arXiv preprint arXiv:1704.07483}, 2017.

\bibitem[Buza(2013)]{buza2013feedback}
Buza, K.
\newblock Feedback prediction for blogs.
\newblock In \emph{Data analysis, machine learning and knowledge discovery},
  pp.\  145--152. Springer, 2013.

\bibitem[Chen \& Guestrin(2016)Chen and Guestrin]{chen2016xgboost}
Chen, T. and Guestrin, C.
\newblock Xgboost: A scalable tree boosting system.
\newblock In \emph{Proceedings of the 22nd acm sigkdd international conference
  on knowledge discovery and data mining}, pp.\  785--794, 2016.

\bibitem[Chilinski \& Silva(2020)Chilinski and Silva]{chilinski2020neural}
Chilinski, P. and Silva, R.
\newblock Neural likelihoods via cumulative distribution functions.
\newblock In \emph{Conference on Uncertainty in Artificial Intelligence}, pp.\
  420--429. PMLR, 2020.

\bibitem[Clevert et~al.(2016)Clevert, Unterthiner, and
  Hochreiter]{clevert2015fast}
Clevert, D.-A., Unterthiner, T., and Hochreiter, S.
\newblock Fast and accurate deep network learning by exponential linear units
  (elus).
\newblock \emph{In ICLR}, 2016.

\bibitem[Daniels \& Velikova(2010)Daniels and Velikova]{daniels2010monotone}
Daniels, H. and Velikova, M.
\newblock Monotone and partially monotone neural networks.
\newblock \emph{IEEE Transactions on Neural Networks}, 21\penalty0
  (6):\penalty0 906--917, 2010.

\bibitem[Dressel \& Farid(2018)Dressel and Farid]{dressel2018accuracy}
Dressel, J. and Farid, H.
\newblock The accuracy, fairness, and limits of predicting recidivism.
\newblock \emph{Science advances}, 4\penalty0 (1):\penalty0 eaao5580, 2018.

\bibitem[Dubey et~al.(2022)Dubey, Singh, and Chaudhuri]{dubey2022activation}
Dubey, S.~R., Singh, S.~K., and Chaudhuri, B.~B.
\newblock Activation functions in deep learning: A comprehensive survey and
  benchmark.
\newblock \emph{Neurocomputing}, 503:\penalty0 92--108, 2022.

\bibitem[Dugas et~al.(2000)Dugas, Bengio, B{\'e}lisle, Nadeau, and
  Garcia]{dugas2000incorporating}
Dugas, C., Bengio, Y., B{\'e}lisle, F., Nadeau, C., and Garcia, R.
\newblock Incorporating second-order functional knowledge for better option
  pricing.
\newblock \emph{Advances in neural information processing systems}, 13, 2000.

\bibitem[Dugas et~al.(2009)Dugas, Bengio, B{\'e}lisle, Nadeau, and
  Garcia]{dugas2009incorporating}
Dugas, C., Bengio, Y., B{\'e}lisle, F., Nadeau, C., and Garcia, R.
\newblock Incorporating functional knowledge in neural networks.
\newblock \emph{Journal of Machine Learning Research}, 10\penalty0 (6), 2009.

\bibitem[Fabris et~al.(2022)Fabris, Messina, Silvello, and
  Susto]{fabris2022algorithmic}
Fabris, A., Messina, S., Silvello, G., and Susto, G.~A.
\newblock Algorithmic fairness datasets: the story so far.
\newblock \emph{Data Mining and Knowledge Discovery}, 36\penalty0 (6):\penalty0
  2074--2152, 2022.

\bibitem[Glorot \& Bengio(2010)Glorot and Bengio]{glorot2010understanding}
Glorot, X. and Bengio, Y.
\newblock Understanding the difficulty of training deep feedforward neural
  networks.
\newblock In \emph{Proceedings of the thirteenth international conference on
  artificial intelligence and statistics}, pp.\  249--256. JMLR Workshop and
  Conference Proceedings, 2010.

\bibitem[Goodfellow et~al.(2013)Goodfellow, Warde-Farley, Mirza, Courville, and
  Bengio]{goodfellow2013maxout}
Goodfellow, I., Warde-Farley, D., Mirza, M., Courville, A., and Bengio, Y.
\newblock Maxout networks.
\newblock In \emph{International conference on machine learning}, pp.\
  1319--1327. PMLR, 2013.

\bibitem[Gupta et~al.(2019)Gupta, Shukla, Marla, Kolbeinsson, and
  Yellepeddi]{gupta2019incorporate}
Gupta, A., Shukla, N., Marla, L., Kolbeinsson, A., and Yellepeddi, K.
\newblock How to incorporate monotonicity in deep networks while preserving
  flexibility?
\newblock \emph{in NeurIPS}, 2019.

\bibitem[Gupta et~al.(2016)Gupta, Cotter, Pfeifer, Voevodski, Canini, Mangylov,
  Moczydlowski, and Van~Esbroeck]{gupta2016monotonic}
Gupta, M., Cotter, A., Pfeifer, J., Voevodski, K., Canini, K., Mangylov, A.,
  Moczydlowski, W., and Van~Esbroeck, A.
\newblock Monotonic calibrated interpolated look-up tables.
\newblock \emph{Journal of Machine Learning Research}, 17\penalty0
  (109):\penalty0 1--47, 2016.

\bibitem[He et~al.(2016)He, Zhang, Ren, and Sun]{he2016resnet}
He, K., Zhang, X., Ren, S., and Sun, J.
\newblock Deep residual learning for image recognition.
\newblock In \emph{Proceedings of the IEEE conference on computer vision and
  pattern recognition}, pp.\  770--778, 2016.

\bibitem[Hein et~al.(2019)Hein, Andriushchenko, and Bitterwolf]{hein2019relu}
Hein, M., Andriushchenko, M., and Bitterwolf, J.
\newblock Why relu networks yield high-confidence predictions far away from the
  training data and how to mitigate the problem.
\newblock In \emph{Proceedings of the IEEE/CVF conference on computer vision
  and pattern recognition}, pp.\  41--50, 2019.

\bibitem[Ioffe(2015)]{ioffe2015batch}
Ioffe, S.
\newblock Batch normalization: Accelerating deep network training by reducing
  internal covariate shift.
\newblock \emph{in ICML}, 2015.

\bibitem[Jeanselme et~al.(2023)Jeanselme, Yoon, Tom, and
  Barrett]{jeanselme2023neural}
Jeanselme, V., Yoon, C.~H., Tom, B., and Barrett, J.
\newblock Neural fine-gray: Monotonic neural networks for competing risks.
\newblock In \emph{Conference on Health, Inference, and Learning}, pp.\
  379--392. PMLR, 2023.

\bibitem[Jin et~al.(2016)Jin, Xu, Feng, Wei, Xiong, and Yan]{jin2016deep}
Jin, X., Xu, C., Feng, J., Wei, Y., Xiong, J., and Yan, S.
\newblock Deep learning with s-shaped rectified linear activation units.
\newblock In \emph{Proceedings of the AAAI conference on artificial
  intelligence}, volume~30, 2016.

\bibitem[Kim \& Lee(2024)Kim and Lee]{kim2024scalable}
Kim, H. and Lee, J.-S.
\newblock Scalable monotonic neural networks.
\newblock In \emph{The Twelfth International Conference on Learning
  Representations}, 2024.
\newblock URL \url{https://openreview.net/forum?id=DjIsNDEOYX}.

\bibitem[Klambauer et~al.(2017)Klambauer, Unterthiner, Mayr, and
  Hochreiter]{klambauer2017self}
Klambauer, G., Unterthiner, T., Mayr, A., and Hochreiter, S.
\newblock Self-normalizing neural networks.
\newblock \emph{Advances in neural information processing systems}, 30, 2017.

\bibitem[Liu et~al.(2020)Liu, Han, Zhang, and Liu]{liu2020certified}
Liu, X., Han, X., Zhang, N., and Liu, Q.
\newblock Certified monotonic neural networks.
\newblock \emph{Advances in Neural Information Processing Systems},
  33:\penalty0 15427--15438, 2020.

\bibitem[Mikulincer \& Reichman(2022)Mikulincer and
  Reichman]{mikulincer2022size}
Mikulincer, D. and Reichman, D.
\newblock Size and depth of monotone neural networks: interpolation and
  approximation.
\newblock \emph{Advances in Neural Information Processing Systems},
  35:\penalty0 5522--5534, 2022.

\bibitem[Milani~Fard et~al.(2016)Milani~Fard, Canini, Cotter, Pfeifer, and
  Gupta]{milani2016fast}
Milani~Fard, M., Canini, K., Cotter, A., Pfeifer, J., and Gupta, M.
\newblock Fast and flexible monotonic functions with ensembles of lattices.
\newblock \emph{Advances in neural information processing systems}, 29, 2016.

\bibitem[Mitchell(1980)]{mitchell1980need}
Mitchell, T.~M.
\newblock The need for biases in learning generalizations.
\newblock Technical Report CBM-TR-117, Carnegie Mellon University, 1980.

\bibitem[Nair \& Hinton(2010)Nair and Hinton]{nair2010rectified}
Nair, V. and Hinton, G.~E.
\newblock Rectified linear units improve restricted boltzmann machines.
\newblock In \emph{Proceedings of the 27th international conference on machine
  learning (ICML-10)}, pp.\  807--814, 2010.

\bibitem[Nguyen \& Mart{\'\i}nez(2019)Nguyen and
  Mart{\'\i}nez]{nguyen2019mononet}
Nguyen, A.-p. and Mart{\'\i}nez, M.~R.
\newblock Mononet: towards interpretable models by learning monotonic features.
\newblock \emph{In NeurIPS}, 2019.

\bibitem[Nolte et~al.(2023)Nolte, Kitouni, and Williams]{kitouni2023expressive}
Nolte, N., Kitouni, O., and Williams, M.
\newblock Expressive monotonic neural networks.
\newblock In \emph{The Eleventh International Conference on Learning
  Representations}, 2023.
\newblock URL \url{https://openreview.net/forum?id=w2P7fMy_RH}.

\bibitem[Omi et~al.(2019)Omi, Aihara, et~al.]{omi2019fully}
Omi, T., Aihara, K., et~al.
\newblock Fully neural network based model for general temporal point
  processes.
\newblock \emph{Advances in neural information processing systems}, 32, 2019.

\bibitem[Paszke et~al.(2019)Paszke, Gross, Massa, Lerer, Bradbury, Chanan,
  Killeen, Lin, Gimelshein, Antiga, et~al.]{paszke2019pytorch}
Paszke, A., Gross, S., Massa, F., Lerer, A., Bradbury, J., Chanan, G., Killeen,
  T., Lin, Z., Gimelshein, N., Antiga, L., et~al.
\newblock Pytorch: An imperative style, high-performance deep learning library.
\newblock \emph{Advances in neural information processing systems}, 32, 2019.

\bibitem[Raghu et~al.(2017)Raghu, Poole, Kleinberg, Ganguli, and
  Sohl-Dickstein]{raghu2017expressive}
Raghu, M., Poole, B., Kleinberg, J., Ganguli, S., and Sohl-Dickstein, J.
\newblock On the expressive power of deep neural networks.
\newblock In \emph{international conference on machine learning}, pp.\
  2847--2854. PMLR, 2017.

\bibitem[Ravikumar \& Sriraman(2023)Ravikumar and
  Sriraman]{ravikumar2023mitigating}
Ravikumar, A. and Sriraman, H.
\newblock Mitigating vanishing gradient in sgd optimization in neural networks.
\newblock In \emph{International Conference on Information, Communication and
  Computing Technology}, pp.\  1--11. Springer, 2023.

\bibitem[Runje \& Shankaranarayana(2023)Runje and
  Shankaranarayana]{runje2023constrained}
Runje, D. and Shankaranarayana, S.~M.
\newblock Constrained monotonic neural networks.
\newblock In \emph{International Conference on Machine Learning}, pp.\
  29338--29353. PMLR, 2023.

\bibitem[S{\'a}nchez et~al.(2023)S{\'a}nchez, Costa, and
  Couso]{sanchez2023simplified}
S{\'a}nchez, L., Costa, N., and Couso, I.
\newblock Simplified models of remaining useful life based on stochastic
  orderings.
\newblock \emph{Reliability Engineering \& System Safety}, 237:\penalty0
  109321, 2023.

\bibitem[Sill \& Abu-Mostafa(1996)Sill and Abu-Mostafa]{sill1996monotonicity}
Sill, J. and Abu-Mostafa, Y.
\newblock Monotonicity hints.
\newblock \emph{Advances in neural information processing systems}, 9, 1996.

\bibitem[Sivaraman et~al.(2020)Sivaraman, Farnadi, Millstein, and Van~den
  Broeck]{sivaraman2020counterexample}
Sivaraman, A., Farnadi, G., Millstein, T., and Van~den Broeck, G.
\newblock Counterexample-guided learning of monotonic neural networks.
\newblock \emph{Advances in Neural Information Processing Systems},
  33:\penalty0 11936--11948, 2020.

\bibitem[Szanda{\l}a(2021)]{szandala2021review}
Szanda{\l}a, T.
\newblock Review and comparison of commonly used activation functions for deep
  neural networks.
\newblock \emph{Bio-inspired neurocomputing}, pp.\  203--224, 2021.

\bibitem[Tagasovska \& Lopez-Paz(2019)Tagasovska and
  Lopez-Paz]{tagasovska2019single}
Tagasovska, N. and Lopez-Paz, D.
\newblock Single-model uncertainties for deep learning.
\newblock \emph{Advances in neural information processing systems}, 32, 2019.

\bibitem[Vaswani(2017)]{vaswani2017attention}
Vaswani, A.
\newblock Attention is all you need.
\newblock \emph{Advances in Neural Information Processing Systems}, 2017.

\bibitem[Veli{\v{c}}kovi{\'c}(2019)]{velivckovic2019resurgence}
Veli{\v{c}}kovi{\'c}, P.
\newblock \emph{The resurgence of structure in deep neural networks}.
\newblock PhD thesis, University of Cambridge, 2019.

\bibitem[Wehenkel \& Louppe(2019)Wehenkel and
  Louppe]{wehenkel2019unconstrained}
Wehenkel, A. and Louppe, G.
\newblock Unconstrained monotonic neural networks.
\newblock \emph{Advances in neural information processing systems}, 32, 2019.

\bibitem[Yanagisawa et~al.(2022)Yanagisawa, Miyaguchi, and
  Katsuki]{yanagisawa2022hierarchical}
Yanagisawa, H., Miyaguchi, K., and Katsuki, T.
\newblock Hierarchical lattice layer for partially monotone neural networks.
\newblock \emph{Advances in Neural Information Processing Systems},
  35:\penalty0 11092--11103, 2022.

\bibitem[You et~al.(2017)You, Ding, Canini, Pfeifer, and Gupta]{you2017deep}
You, S., Ding, D., Canini, K., Pfeifer, J., and Gupta, M.
\newblock Deep lattice networks and partial monotonic functions.
\newblock \emph{Advances in neural information processing systems}, 30, 2017.

\end{thebibliography}
\bibliographystyle{icml2025}

\onecolumn
\newpage
\appendix

\section{Appendix}
The appendix is structured in the following way: 
\begin{itemize}

    \item \textbf{\cref{sec:2k_layers_bound}}: in this section, we show the arguably simplest though loosest bound to prove that non-negative constrained MLPs with \text{ReLU} and \text{ReLU}' activations are universal approximators.

    \item \textbf{\cref{sec:vanishing_gradient}}: we discuss how parameterizing the network with non-negative weights leads to optimization issues, specifically vanishing gradients dynamics.

    \item \textbf{\cref{sec:toy_example}}: in this section, we will compare the proposed method to the bounded-activation counterpart, showing how the formulation with sigmoidal activation suffers from vanishing gradients.

    \item \textbf{\cref{sec:complementary_proof}}: in this section we prove the results of \cref{thm:universal_approx} for the opposite alternation case.

    \item \textbf{\cref{sec:pre_and_post_algorithm}}: as reported in \cref{sec:relax_weight_constraint}, we propose two possible parametrizations, a pre-activation switch, and a post-activation switch. In \cref{sec:pre_and_post_algorithm}, the pseudocode and the computational graph of the two can be found.

    \item \textbf{\cref{sec:dataset_desctiption} and \cref{sec:experiment_description}}: in these sections, we report further information regarding how the results have been obtained and about the datasets employed for this work.
    
    \item \textbf{\cref{sec:other_activations}}: since \cref{thm:universal_approx} only requires the non-linearity to be saturating, in this section we report a brief overview of other activations that can be applied with the proposed method, in order to underline how it is more general than just using \text{ReLU} activations.    

    \item \textbf{\cref{sec:alternative_proof}}: the proof provided for \cref{thm:universal_approx} is different to the ones previously proposed in literature. However, it still ends with the result of requiring 4 layers to be a universal approximator, as previously shown in \cite{mikulincer2022size} for the heavy-side function. For readers that are already familiar with such proof, we also report in \cref{sec:alternative_proof} a proof very similar to the one in \cite{mikulincer2022size}, trying to reuse as much as possible the original structure.

\end{itemize}
\newpage

\subsection{Naive bound for universal approximation of alternating MLPs}
\label{sec:2k_layers_bound}

A simpler, though looser, bound to prove that MLPs with alternating \text{ReLU} and its point reflection \text{ReLU}' activations are a universal monotonic function approximator can be achieved by building on the proof of \cite{mikulincer2022size}. Two simple observations are sufficient.
\begin{remark}
    the composition of \text{ReLU} and its point reflections ${\text{ReLU}'(x)=-\text{ReLU}(-x)}$ can approximate the threshold function ${\indicator{x\geq0}}$ arbitrarily well: 
    \begin{gather}
        \label{eq:heavyside approx}
        \lim_{\alpha\rightarrow +\infty} \text{ReLU}(\text{ReLU}'(\alpha x) + 1) = \indicator{x\geq0}(x) \\
        \lim_{\alpha\rightarrow +\infty} \text{ReLU}'(\text{ReLU}(\alpha x) - 1) = \indicator{x\geq0}(x) -1
    \end{gather}
\end{remark}
A representation of \cref{eq:heavyside approx} is provided in \cref{fig:relu_and_co}.

The reason why we can approximate non-convex functions using only \text{ReLU}-like activations is reported in \cref{prop:convex_approx_convex}. However, considering \cref{prop:monotonic_point_reflection}, we can see how this limitation can be addressed.

\begin{remark}
    The formulas in \cref{eq:heavyside approx} can be implemented with a 2-layer constrained MLP, alternating \text{ReLU} and \text{ReLU}' activations.
\end{remark}
This is enough to leverage the existing results for threshold-activated MLP \citep{mikulincer2022size}. This includes the best-known bound on the number of required hidden layers, which, however, doubles from ${3}$ to ${6}$ due to the need for two \text{ReLU} layers for the Heavyside approximation. However, this naive bound is unnecessarily loose, as shown in \cref{thm:universal_approx}.

\subsection{Initialization issues and Vanishing Gradient in Constrained MLPs}
\label{sec:vanishing_gradient}
\subsubsection{Vanishing gradients dynamics in constrained monotonic MLPs with bounded activations}
As reported in \cref{sec:monotone_mlp}, a naive approach to ensure monotonicity is to have monotonic activations and to impose monotonicity to the weights, constraining them to be non-negative. For this reason, the affine transformations of these networks are usually parametrized as ${l(x) = g(W)x+b}$, for some transformation ${g:\R \rightarrow \R_+}$. Note that the bias can be any value, as it is a constant and thus does not affect the gradient.

Such networks employed bounded activations, like sigmoid, tanh, or \text{ReLU6}, to have convex-concave activations. This peculiarity makes them very sensitive to initialization and can potentially lead to vanishing gradient dynamics \citep{glorot2010understanding}. To see why constraining weights to be non-negative exacerbates this condition, consider a monotonic MLP with sigmoidal activations, initialized with random weights according to known, widely used initializers, such as Glorot, where each matrix is sampled from a symmetric distribution around zero with some variance. Instead, the biases are initialized to zero, as is usually done. Let's assume using ${g(x) = |x|}$, but the same reasoning can be applied to any other mapping ${g}$. At this point, the MLP comprises layers of the following form ${\sigma(x) = \sigma(|W|x+b)}$. Now, let's consider the second layer of such MLP. Since the first layer has applied the sigmoid activation, then ${\sigma\layer{1}(x) \in (0,1)}$. Because of this, ${|W\layer{2}|\sigma\layer{1}(x)}$ will be a product of all non-negative terms. Therefore, its result can become significantly large. Then, when applying the sigmoid activation of the second layer, it will most likely saturate due to the large positive values returned from the affine transformation. Going on with this reasoning for multiple layers, such behavior will be exacerbated. \cref{sec:toy_example} shows one example of such behavior for a very simple function. The same behavior occurs for  \text{ReLU6} MLPs, where the gradient might even become exactly ${0}$, and for tanh MLPs if, for example, the dataset is normalized, which is one of the most commonly used data-preprocessing.

A simple evidence of such dynamic can be seen in \cref{sec:toy_example}, where even with a 1D toy example, constrained monotonic MLPs with sigmoid with few layers show signs of vanishing gradient (\cref{fig:comparison}). However, such signs of vanishing gradient are evident since initialization, as shown in \cref{fig:avg_grad}

One possible solution might be using BatchNormalization layers \citep{ioffe2015batch}. BatchNorm has already shown its effectiveness in tackling initialization and optimization problems. Indeed, BatchNorm is comprised only of the following transformation:
$$
BN(x) = \frac{x-\mathbb{E}[x]}{\sqrt{\text{Var}[x]+\epsilon}}\cdot \gamma + \beta
$$
Considering that ${\sqrt{\text{Var}[x]+\epsilon} > 0}$, forcing ${\gamma \ge 0}$ by construction, for example, using ${\gamma = \text{SoftPlus}(\gamma')}$, makes such operation monotonic. Usually, it is initialized as ${\beta=0}$ and ${\gamma=1}$. For this reason, if used as a pre-activation layer, it might address exploding pre-activation values, standardizing them around zero. However, the investigation of this approach falls out of the scope of this work, and it's left as a future line of research.

\subsubsection{Initialization of switch monotonic neural networks}
The proposed activation switch parametrization shown in \cref{eq:pre_activation_switch} and \cref{eq:post_activation_switch} aims at relaxing monotonic MLPs by the weight constraint required to ensure monotonicity, while also removing the need to pick a correct sequence of activations manually. As addressed in the previous chapter, that weight constraint easily leads to vanishing gradients if used in conjunction with bounded activation, extremized when using ReLU6 activations, which leads almost always to a completely dead MLP, as reported in \cref{sec:toy_example}. Therefore, our parametrization tries to solve this problem from two points: avoiding directly constraining weights and using unbounded activations.

Nonetheless, it still deviates from the well-studied original MLP formulation. In particular, separating the $W$ matrix into positive and negative parts might still introduce initialization issues.


\begin{figure*}[ht]
    \begin{center}
        \centerline{
        \includegraphics[width=0.4\textwidth]{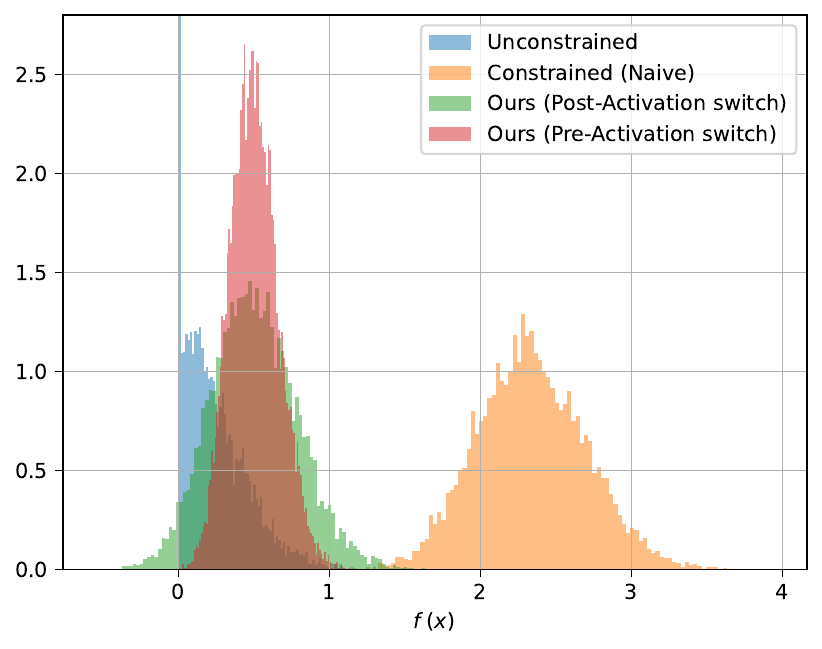}
        \includegraphics[width=0.4\textwidth]{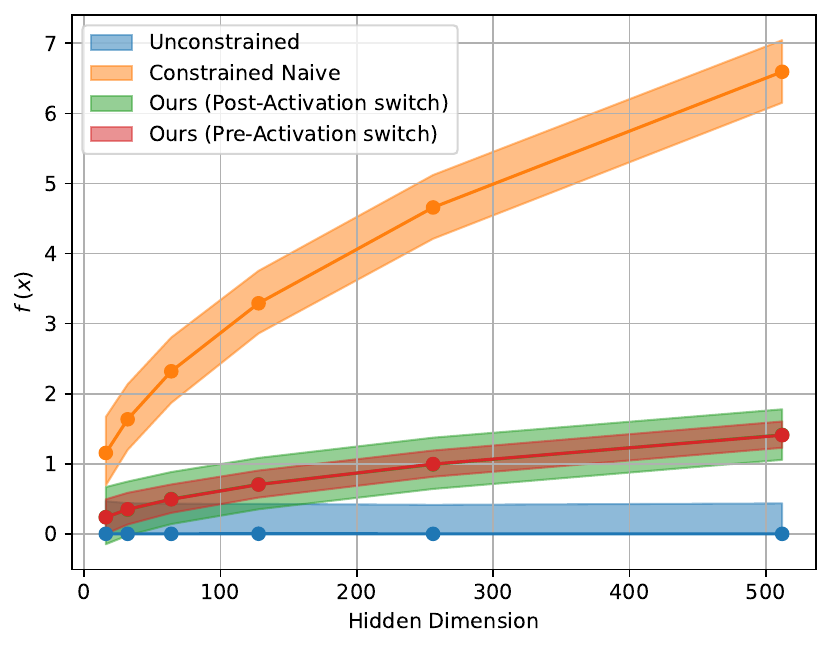}}
        \caption{First plot, the distribution of the output of an MLP with the different parametrizations. Second plot, the scaling law of the expected output after initialization of the different parametrization. In both images, it can be seen how the naive constrained MLP has a very different scaling behavior compared to the rest.}
        \label{fig:expected_function_value}
    \end{center}
\end{figure*}

From an empirical perspective, in \cref{fig:expected_function_value} we can observe the distribution of a the output of a multilayer MLP with different parametrization. Unconstrained refers to a naive MLP, Constrained Naive refers to an MLP with $|W|$ as weight parameterization, while pre/post activation switch refers to \cref{eq:pre_activation_switch} and \cref{eq:post_activation_switch}. Indeed, it can be seen how the naive MLP, no matter the hidden dimension, have 0-mean in predicted value. Instead, the switch activation has a very slow tendency to increase the expected output from random initialization, as predicted by theory. However, such slight increase, is notably largely smaller than the one induced by naively constraining the weights to be positive.

However, empirically, the activation-switch formulation does not exhibit initialization issues. Indeed, the result reported in \cref{tab:combined_test} have been obtained with the default PyTorch initialization. Furthermore, such results are the aggregation of multiple seeds. Thus, the networks used have been initialized with different values.

Overall, we can see how, even though the behavior of activation-switch parametrization slightly deviates from the one by a normal MLP, such difference does not hinder many performance while still being less than the one obtained by the constrained counterpart that does not use such a trick.

\newpage

\subsection{Comparison of different parametrizations on trivial datasets}
\label{sec:toy_example}
To showcase the effectiveness of the proposed method to the bounded-activation counterpart, in \cref{fig:comparison} we compare them on a simple synthetic example. In particular, the models are asked to approximate ${f(x) = \cos(x) + x}$, a simple 1D monotonic function with multiple saddle points. For this reason, it is fundamental for the approximation model to be very flexible. To showcase the different performances, we will test 4 models. The first model to test is an unconstrained NN, which shows that an unconstrained model can learn such a function. The second model is a monotonic NN with non-negative and \text{ReLU} activations, which shows that, as shown in theory, it cannot approximate a nonconvex function. The third model is a monotonic NN with non-negative and sigmoid activations. This model, instead, is shown to be a universal approximator for monotonic functions but suffers from vanishing gradients. Lastly, the fourth model is the proposed parametrization, specifically the post-activation setting, as described in \cref{sec:switch_activation_mlp}.

In \cref{fig:comparison}, it can be seen how the model with non-negative and \text{ReLU} activations cannot learn the function as predicted by theory, since the function that is asked to learn is non-convex. Instead, both the sigmoid model and our proposed approach successfully approximate it. Still, the sigmoid function struggles to be optimized due to the complications of using sigmoid activations. Instead, the proposed method exploits rectified linear activations, which, under a regime where the number of dead neurons is not too high, is much easier to optimize, as explained in the original work that introduced such activation \citet{glorot2010understanding} and \citet{raghu2017expressive}.

Such a difference is also evident in analyzing the Negative Log Likelihood (NLL) loss of the training. We report in \cref{fig:comparison} the various training losses obtained with two different sizes of layers. The naive monotonic \text{ReLU}, which cannot approximate such a function, is indeed the worst. However, even though the sigmoid monotonic NN is a universal approximator, it is the slowest to learn, probably due to the vanishing gradient problem. Instead, the proposed method that uses \text{ReLU} activations is the fastest to converge, almost catching the unconstrained model in the setting with more neurons.

\begin{figure*}[ht]
    \begin{center}
        \centerline{
        \includegraphics[width=\textwidth]{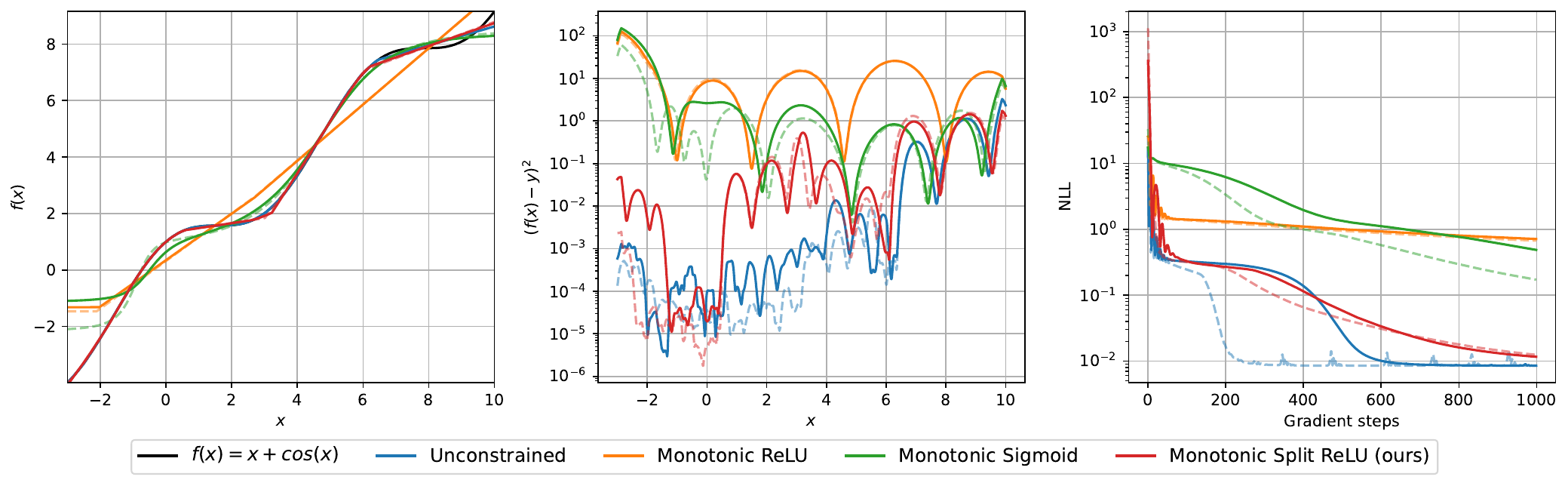}}
        \caption{First plot, approximation of ${f(x)}$ using MLPs with layers of 128 neurons. Second plot, approximation of ${f(x)}$ using MLPs with layers of 256 neurons. Last plot, training losses of the different methods (full lines represent versions with 128 neurons, dashed lines represent versions with 256 neurons).}
        \label{fig:comparison}
    \end{center}
\end{figure*}

Generally speaking, as also reported at the end of \cref{sec:negative_weights_mlp}, MLPs with constrained weights require a careful initialization to avoid non-optimizable configurations. The proposed method in \cref{sec:pre_and_post_algorithm} alleviates this behavior but is not indifferent to it.

In order to showcase the vanishing gradient problem exacerbated by the non-negatively constraining, in \cref{fig:avg_grad} we create a 128-neuron wide MLP with varying numbers of hidden layers, and we compare the average gradient of the output with respect to the parameters on the same function approximation problem presented earlier in \cref{fig:comparison}. It can be observed how the sigmoid monotonic MLP, even with a small number of layers, has one order of magnitude less gradient magnitude; in particular, it has an average gradient of $0.0019$ for 4 layers and $0.00099$ for 10 layers. Instead,  the \text{ReLU} monotonic MLP has an exploding gradient due to the accumulation of activations induced by the pairing of \text{ReLU} activation and positive weight; in particular, it starts from a gradient magnitude of $3.54$ for 4 layers and goes to $3311.00$ for 10 layers. Finally, the proposed approach keeps the gradient magnitude in a reasonable magnitude range, starting from a gradient of $0.010$ for 4 layers and going to $1.259$ for 10 layers. Results are averaged over 20 different random initializations, and plot shows $\pm 1\sigma$. In order to better analyze the optimization problems of these architectures, we also report in \cref{fig:hist_grad} the distributions of the gradients of a 6-layers MLP with the various architectures. It can be seen that the sigmoid MLP has extremely low gradients for the initial layers, leading to slow learning. On the other hand, the \text{ReLU} MLP has exploding gradients for the final layers.\\
It is worth noting that the same exact configuration, using \text{ReLU6} as activation, leads to a dead network, as all gradients are zeroed out due to the saturated section of the activation.

\begin{figure*}[h]
    \begin{center}
        \centerline{
        \includegraphics[width=0.6\textwidth]{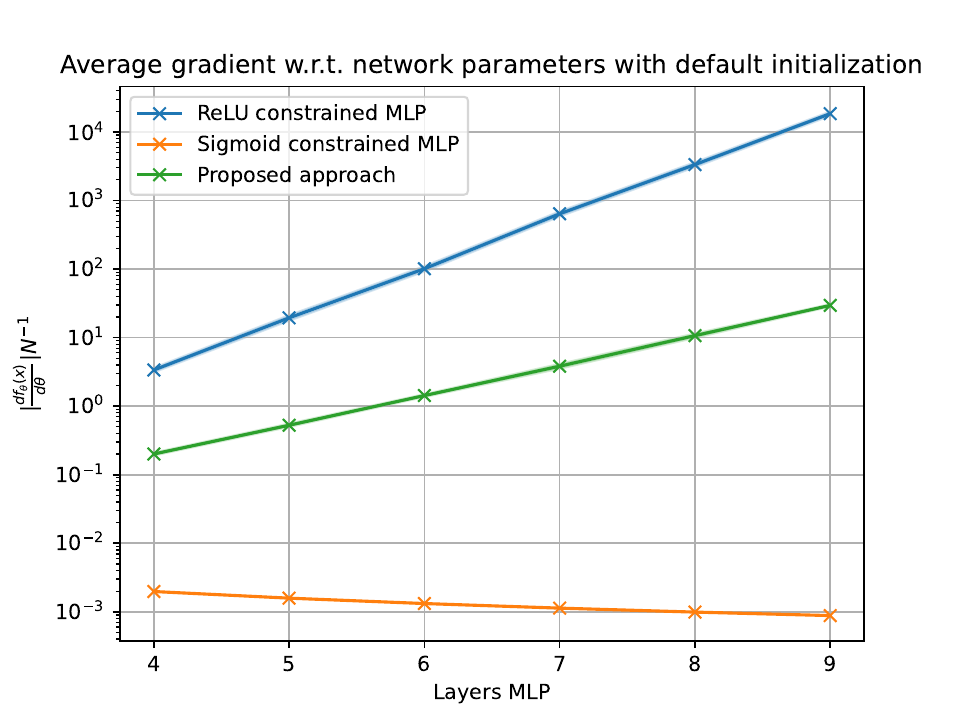}}
        \caption{Average gradient from monotonic MLPs varying the number of layers. Data is shown in the log scale for the $y$-axis.}
        \label{fig:avg_grad}
    \end{center}
\end{figure*}

\begin{figure*}[h]
    \begin{center}
        \centerline{
        \includegraphics[width=0.9\textwidth]{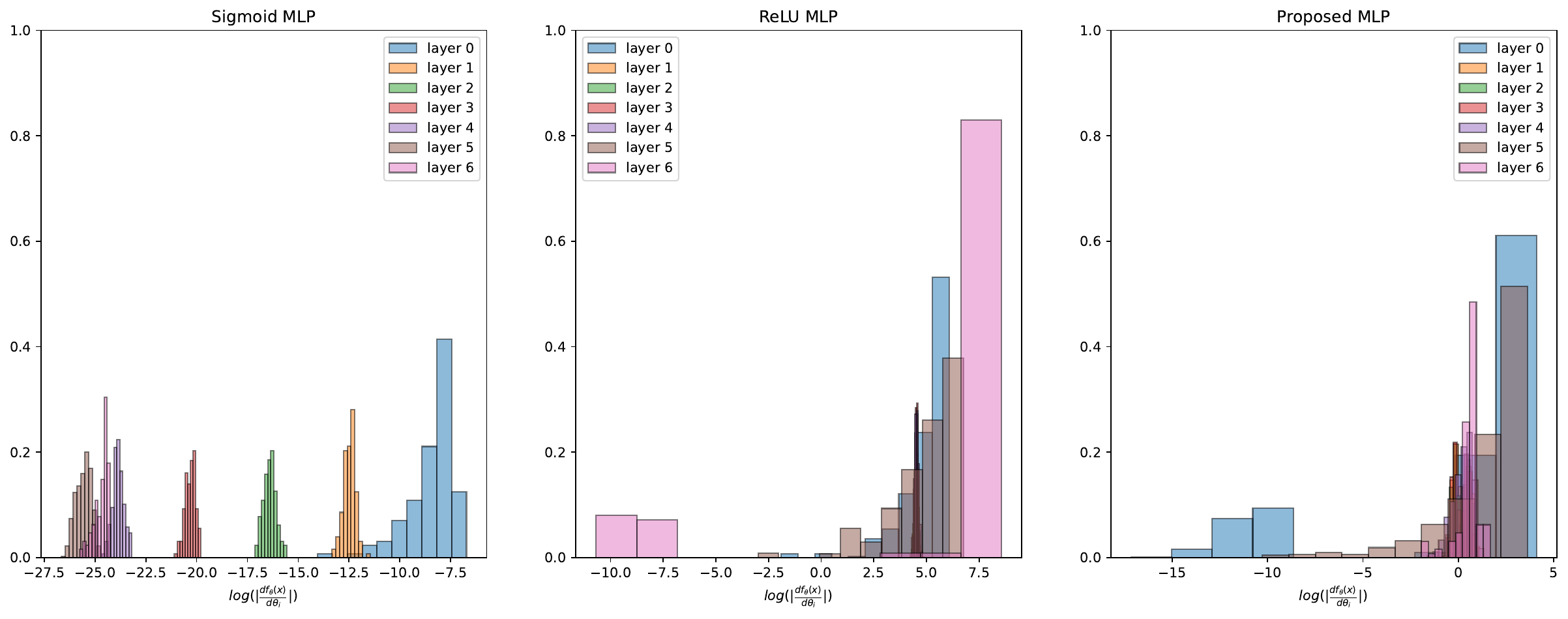}}
        \caption{Distribution of gradients from monotonic MLPs for each layer (layer 0 is the final one, layer 6 is the first after the input).}
        \label{fig:hist_grad}
    \end{center}
\end{figure*}

\newpage

\subsection{Proof for opposite alternation of activation for \cref{thm:universal_approx}}
\label{sec:complementary_proof}
In this section, we will conclude the proof of \cref{thm:universal_approx}, considering the case with activations that alternate in the opposite direction from the one reported in the main text. Indeed, in \cref{sec:main_theorem} we proved the result for the case with ${\sigma\layer{1} \in \mathcal{S}^-, \sigma\layer{2} \in \mathcal{S}^+, \sigma\layer{3} \in \mathcal{S}^-}$, while in this section we will prove the case with ${\sigma\layer{1} \in \mathcal{S}^+, \sigma\layer{2} \in \mathcal{S}^-, \sigma\layer{3} \in \mathcal{S}^+}$. The proof is extremely similar, with just a few opposite signs due to the opposite alternation. Thus, most constructions will be shared.

\begin{proof}[Proof of \cref{thm:universal_approx} with opposite alternation]
    
    Assume, without loss of generality, that the points ${x_1, \dots, x_n}$ are ordered so that ${i < j\implies f(x_{i})\leq f(x_{j})}$, with ties resolved arbitrarly.
    We will proceed by construction, layer by layer.
    
    \paragraph{Layer 1}    
    Since the function to interpolate is monotonic, for any couple of points ${i < j: f(x_{i})<f(x_{j})}$ it is possible to find a hyperplane with non-negative normal, with positive and negative half spaces denoted by ${A^+_{j/i}}$ and ${A^-_{j/i}}$, such that ${x_{i}\in A^-_{j/i}, x_{j}\in A^+_{j/i}}$.
    
    Using \cref{lem:lemma1}, we can ensure that it is possible to have:
    \begin{equation}
    \label{eq:case1_layer1a}
        \begin{cases}
            h\layer{1}_{i}(x) \approx \sigma\layer{1}(+\infty)=0,  & \text{if } x \in A^+_{j/i}\\
            h\layer{1}_{i}(x) \approx \sigma\layer{1}(-\infty)<0,  & \text{otherwise }\\ 
        \end{cases}
    \end{equation}
    
    \paragraph{Layer 2}    
    Let us construct the set ${A\layer{2}_{i}=\bigcap_{j:j<i} A^+_{i/j}}$. Note that the sets ${A\layer{2}_{i}}$ always contain ${x_i}$ and do not contain any ${x_j}$ for ${j<i}$.
    Using \cref{eq:case1_layer1a}, we can apply \cref{lem:lemma2}, which ensures that it is possible to have the following\footnote{In this case ${\gamma\layer{2}>0}$ since we are considering the case where ${\sigma\layer{2}}$ saturates left.}:
    \begin{equation}
    \label{eq:case1_layer2a}
        \begin{cases}
            h\layer{2}_i(x)  \approx 0,  & \text{if } x \in A\layer{2}_{i} \\
            h\layer{2}_i(x)  \approx \gamma\layer{2} > 0,  & \text{otherwise }\\ 
        \end{cases}
    \end{equation}     

    \paragraph{Layer 3}
    Consider ${A\layer{3}_{i}=\bigcap_{j:j>i} \bar{A}\layer{2}_{j}}$, where ${\bar{A}\layer{2}_{j}}$ is the complement of ${A\layer{2}_{j}}$.
    Using \cref{eq:case1_layer2a} we can once again apply \cref{lem:lemma2}, which ensures that it is possible to have the following\footnote{In this case ${\gamma\layer{3} < 0}$ since we are considering the case where ${\sigma\layer{3}}$ saturates right.}:
    \begin{equation}
        \label{eq:case1_layer3a}
        h\layer{3}_i(x)\approx \gamma\layer{3} \indicator{A\layer{3}_{i}}(x)
    \end{equation} 

    Now, we will show that ${A\layer{3}_{i}}$ represents a level set, i.e. ${x_j \in A\layer{3}_{i} \Longleftrightarrow f(x_j) \le f(x_i)}$. To do so, consider that ${\bar{A}\layer{3}_{i} = \bigcup_{j:j>i} A\layer{2}_{j}}$.
    Since ${x_j \in A\layer{2}_{j}}$, then ${x_j \in \bar{A}\layer{3}_{i}}$ for ${j>i}$. Similarly since ${x_j}$ is the smallest point contained in ${A\layer{2}_{j}}$, ${\bar{A}\layer{3}_{i}}$ cannot contain ${x_i}$ or any point smaller than ${x_i}$. This shows that ${A\layer{3}_{i}}$ contains exactly the points ${\{x_j: f(x_j) \le f(x_i)\}}$.

\paragraph{Layer 4} 
    To conclude the proof, simply take the weights at the fourth layer to be :
    $$
    w=\left[
        \frac{f(x_1)-f(x_2)}{\gamma\layer{3}},  
        \dots, 
        \frac{f(x_{n-1})-f(x_n)}{\gamma\layer{3}},
        \frac{f(x_n)-b}{\gamma\layer{3}}
    \right]
    $$ 
    Note that compared to \cref{eq:case1_layer3}, here $\gamma\layer{3}$ is now negative, and the terms in the numerators' difference are reversed. Since the points are ordered, this ensures that ${w}$ contains all non-negative terms, when bias term $b$ is taken to be ${b\ge f(x_n)}$.
    Defining ${f(x_{n+1})=b}$, the output of the MLP can be expressed as:
    \begin{equation}
    \label{eq:case1_layer4a}
        g_\theta(x) 
        = w^T h\layer{3}(x) + b
        = b +\sum_{j=1}^n \left(f(x_j)-f(x_{j+1})\right) \indicator{A\layer{3}_{j}}(x)
    \end{equation}    
    Evaluating \cref{eq:case1_layer4a} at any of the points ${x_i}$, it reduces to the telescopic sum:
    \begin{equation}
        g_\theta(x_i) 
        = f(x_n)+\sum_{j=i}^{n-1} \left(f(x_j)-f(x_{j+1})\right) 
        = f(x_i)
    \end{equation}
    Thus proving that the network correctly interpolates the target function.
\end{proof}

\subsection{Algorithms}
\label{sec:pre_and_post_algorithm}
In \cref{sec:relax_weight_constraint}, we show how we can parametrize the activation switch using the sign of the weights. For such a mechanism, we propose two different parametrizations, one where the switch is applied post-activation and another one pre-activation. In \cref{fig:side_by_side_comparison}, we report both the pseudo-code and the computational graph for the post-activation formulation. For completeness, in this section, we also report the pre-activation pseudo-code and computational graph, and for readability and to ease the comparison, we report them side by side, reporting again the post-activation formulation also reported in the main text. In particular, in \cref{fig:computational_graphs}, we report the two pseudocode side-by-side, and in \cref{alg:psudocodes}, the relative pseudocodes.

\begin{figure}[ht]
    \begin{center}
        \centerline{
\includegraphics[width=0.5\columnwidth]{images/graph_dav.pdf}
\includegraphics[width=0.5\columnwidth]{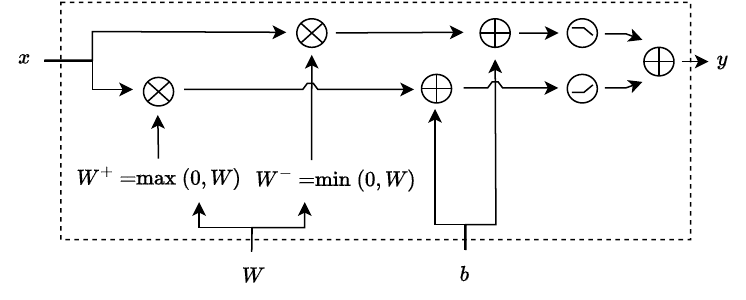}
}
        \caption{Computation graph of a single layer of a \text{ReLU} monotonic NN with the proposed learned activation via weight sign. The left plot reports the computational graph of the post-activation, and the right plot shows the pre-activation switch.}
        \label{fig:computational_graphs}
    \end{center}
\end{figure}

\begin{figure}[ht]
    \begin{minipage}{0.49\columnwidth}
        \begin{algorithm}[H]
            \caption{Forward pass of a Monotonic \text{ReLU} MLP with pre-activation switch}
            \label{alg:mlp_forward_pre_app}
            \begin{algorithmic}
                \STATE {\bfseries Input:} data ${x \in \mathbb{R}^n}$, weight matrix ${W \in \mathbb{R}^{h_{l} \times h_{l-1}}}$, bias vectors ${b \in \mathbb{R}^{h_l}}$, activation function ${\sigma}$ 
                \STATE {\bfseries Output:} prediction ${\hat{y} \in \mathbb{R}^{h_L}}$
                
                \STATE ${W^{+} := \max(W, 0)}$
                \STATE ${W^{-} := \min(W, 0)}$
                \STATE ${z^{+} := W^{+} x+ b}$
                \STATE ${z^{-} := W^{-} x+ b}$
                \STATE ${\hat{y} :=  \sigma(z^{+}) - \sigma(z^{-})}$
                
            \end{algorithmic}
        \end{algorithm}
         \end{minipage}
\hfill
          \begin{minipage}{0.49\columnwidth}
        \begin{algorithm}[H]
            \caption{Forward pass of a Monotonic \text{ReLU} MLP with post-activation switch}
            \label{alg:mlp_forward_post_app}
            \begin{algorithmic}
                \STATE {\bfseries Input:} data ${x \in \mathbb{R}^n}$, weight matrix ${W \in \mathbb{R}^{h_{l} \times h_{l-1}}}$, bias vectors ${b \in \mathbb{R}^{h_l}}$, activation function ${\sigma}$ 
                \STATE {\bfseries Output:} prediction ${\hat{y} \in \mathbb{R}^{h_L}}$
                
                \STATE ${W^{+} := \max(W, 0)}$
                \STATE ${W^{-} := \min(W, 0)}$
                \STATE ${z^{+} := W^{+} \sigma(x) }$
                \STATE ${z^{-} := W^{-} \sigma(-x)}$
                \STATE ${\hat{y} := z^{+} + z^{-} + b}$
                
            \end{algorithmic}
        \end{algorithm}
        \label{alg:psudocodes}
    \end{minipage}
\end{figure}

\subsection{Dataset description}
\label{sec:dataset_desctiption}
For this work, the code was heavily based on the code provided by \citet{runje2023constrained} in order to ensure that the used dataset matched exactly. For this reason, we will report a short description of the employed dataset, but for a further and more detailed description, refer to the original work \citep{runje2023constrained}.
\begin{itemize}
    \item \textbf{COMPAS}: This dataset is a binary classification dataset composed of criminal records, comprised of 13 features, 4 of which are monotonic.
    \item \textbf{Blog Feedback}: This dataset is a regression dataset comprised of 280 features, 8 of which are monotonic, aimed at predicting the number of comments within 24h.
    \item \textbf{Auto MPG}: This dataset is a regression dataset aimed at predicting the miles-per-gallon consumption and is comprised of 7 features, 3 of which are monotonic.
    \item \textbf{Heart Disease}: This dataset is a classification dataset composed of 13 features, 2 of which are monotonic, aimed at predicting a possible heart disease.
    \item \textbf{Loan Defaulter}: This dataset is a classification dataset composed of 28 features, 5 of which are monotonic, and is aimed at predicting loan defaults.
    
\end{itemize}

\subsection{Experiments description}
\label{sec:experiment_description}
The following are the specifications used to obtain the results reported in \cref{tab:combined_test_hyperparams}. The experiments were developed in PyTorch. The training was performed using the Adam optimizer implementation from the PyTorch Library. For all datasets except BlogFeedback, no hyperparameter tuning has been carried out. Instead, we used similar networks in size and architecture to \cite{runje2023constrained}. For BlogFeedback instead, special care was required, as the dataset is very large, but the features are very sparse and mostly unconstrained (only 3\% are monotonic). Therefore, a small hyperparameter tuning has been done to find the best setting. CELU has been used as an activation function for the smaller MLPs to avoid dead neurons. 

\begin{table}[h]
    \caption{Hyper-parameters used for results reported in \cref{tab:combined_test}}
    \label{tab:combined_test_hyperparams}
    \small
    \begin{center}
        {\renewcommand{\arraystretch}{1.5}
        \begin{tabular}{l|c|c|c|c|c}
                Hyper-parameter & \makecell{COMPAS} & \makecell{Blog Feedback} & \makecell{Loan Defaulter} & \makecell{AutoMPG} & \makecell{Heart Disease} \\
                \hline\hline
                Learning-rate & ${10^{-3}}$ & ${10^{-2}}$ & ${10^{-3}}$ & ${10^{-3}}$ & ${10^{-3}}$ \\
                \hline
                Epochs & ${100}$ & ${1000}$ & ${50}$ & ${300}$ & ${300}$\\
                \hline
                Batch-size & ${8}$ & ${256}$ & ${256}$ & ${8}$ & ${8}$\\
                \hline
                Free layers size & ${16}$ & ${2}$ & ${16}$ & ${8}$ & ${16}$\\
                \hline
                Number of free layers & ${3}$ & ${2}$ & ${3}$ & ${3}$ & ${3}$\\
                \hline
                Monotonic layers size & ${16}$ & ${3}$ & ${16}$ & ${8}$ & ${16}$\\
                \hline
                Number of monotonic layers & ${3}$ & ${2}$ & ${3}$ & ${3}$ & ${3}$\\
                \hline
                Activation & $\text{ReLU}$ & $\text{CELU}$ & $\text{ReLU}$ & $\text{CELU}$ & $\text{ReLU}$\\
            \end{tabular}
            \quad}
    \end{center}
\end{table}

\newpage

\subsection{Extension to other activations}
\label{sec:other_activations}
In the rest of the paper, for all the practical examples, we assumed that \text{ReLU} was the activation chosen for the MLP. However, the results in \cref{sec:negative_weights_mlp} and \cref{sec:switch_activation_mlp} only require that the activation function saturates in at least one of the two sides, other than being monotonic. If \text{ReLU} falls in such a category, it is not the only one, and many other widely used \text{ReLU}-like activations satisfy the minimal assumptions of \cref{thm:universal_approx}. For this reason, we will now analyze many other activations and report whether they comply with our construction. In particular, we report in \cref{tab:activations} multiple widely used activations. With them, we also report the respective gradients, whether they are non-decreasing and saturating, and whether they can be used for the proposed approach.

It can be seen that the proposed method allows the usage of most of today's widely used activations. However, it is crucial to notice that even though the proposed method allows for saturating activations, it also can be used with bounded activations, such as sigmoid and tanh, but that might bring almost no additional advantage over the weight-constrained counterpart. Any activation that saturates at least one side can be used, given that it is monotonic. Still, the real advantage comes from activations that saturate only one side.

\begin{table*}[h]
    \caption{Widely used activations with their corresponding properties, and whether they can be used or not.}
    \label{tab:activations}
    \begin{center}
    \small
        {\renewcommand{\arraystretch}{2}
            \begin{tabular}{c|c|c|c|c||c}
                
                Name & Function & Gradient & Monotone & Saturates & Usable\\
                \hline\hline
                ReLU & ${\begin{cases} x & \text{if } x\ge 0 \\ 0 & \text{ otherwise} \end{cases}}$ & ${\begin{cases} 1 & \text{if } x\ge 0 \\ 0 & \text{ otherwise} \end{cases}}$ & \cmark & \cmark & \cmark\\
                \hline
                
                LeakyReLU & ${\begin{cases} x & \text{if } x\ge 0 \\ \alpha x & \text{ otherwise} \end{cases}}$ & ${\begin{cases} 1 & \text{if } x\ge 0 \\ \alpha & \text{ otherwise} \end{cases}}$ & \cmark${^1}$ & \xmark & \xmark\\
                \hline
                
                PReLU & \makecell{${\begin{cases} x & \text{if } x\ge 0 \\ \alpha x & \text{ otherwise}\end{cases}}$ \\ (${\alpha}$ learnable)} & ${\begin{cases} 1 & \text{if } x\ge 0 \\ \alpha & \text{ otherwise} \end{cases}}$ & \cmark${^1}$ & \cmark & \cmark${^1}$\\
                \hline
                
                ReLU6 & ${\begin{cases} 6 & \text{if } x \ge 6 \\ x & \text{if } 0 \le x \le 6 \\ 0 & \text{ otherwise} \end{cases}}$ & ${\begin{cases} 0 & \text{if } x \ge 6 \\  1 & \text{if }  0 \le x \le 6 \\ 0 & \text{ otherwise} \end{cases}}$ & \cmark & \cmark & \cmark\\
                \hline
                
                ELU & ${\begin{cases} x & \text{if }  x \ge 0 \\ \alpha(e^x-1) & \text{otherwise} \end{cases}}$ & ${\begin{cases} 1 & \text{if}  x \ge 0 \\ \alpha e^x & \text{otherwise} \end{cases}}$ & \cmark${^1}$ & \cmark & \cmark${^1}$ \\
                \hline
                
                SELU & ${\lambda\begin{cases} x & \text{if }  x \ge 0 \\ \alpha(e^x-1) & \text{otherwise} \end{cases}}$ & ${\lambda\begin{cases} 1 & \text{if }  x \ge 0 \\ \alpha e^x & \text{otherwise} \end{cases}}$ & \cmark${^1}$ & \cmark & \cmark${^1}$ \\
                \hline

                CELU & ${\lambda\begin{cases} x & \text{if }  x \ge 0 \\ \alpha(\frac{e^x}{\alpha}-1) & \text{otherwise} \end{cases}}$ & ${\lambda\begin{cases} 1 & \text{if }  x \ge 0 \\ \frac{e^x}{\alpha} & \text{otherwise} \end{cases}}$ & \cmark${^1}$ & \cmark & \cmark${^1}$ \\
                \hline
                
                GeLU & ${x\Phi(x)}$ & ${\Phi(x) \frac{1}{\sqrt{2\pi}}e^\frac{-x^2}{2}}$ & \xmark & \cmark & \xmark\\
                \hline
                
                SiLU/Swish & ${x\sigma(x)}$ & ${\frac{e^x(x+e^x+1)}{(e^x+1)^2}}$ & \xmark & \cmark & \xmark\\
                \hline
                
                Sigmoid & ${\frac{1}{1+e^{-x}}}$ & ${\frac{e^{-x}}{(1+e^{-x})^2}}$ & \cmark & \cmark & \cmark\\
                \hline
                
                Tanh & ${\frac{e^{x}-e^{-x}}{e^{x}+e^{-x}}}$ & ${1-\left(\frac{e^{x}-e^{-x}}{e^{x}+e^{-x}}\right)^2}$ & \cmark & \cmark & \cmark\\
                \hline
                
                Exp & ${e^x}$ & ${e^x}$ & \cmark & \cmark & \cmark\\
                \hline
                
                SoftSign & ${\frac{x}{|x|+1}}$ & ${\frac{1}{(|x|+1)^2}}$ & \cmark & \cmark & \cmark\\
                \hline
                
                Softplus & ${\log(1+e^x)}$ & ${\frac{e^x}{e^x+1}}$ & \cmark & \cmark & \cmark\\
                \hline
                
                LogSigmoid & ${-\log(1+e^{-x})}$&${\frac{1}{1+e^{x}}}$& \cmark & \cmark & \cmark \\
                
        \end{tabular}} \quad
    \end{center}
    \hspace{0.5cm}${^1}$: true only if parametrized in such a way to guarantee ${\alpha \ge 0}$
\end{table*}

\newpage

\subsection{Alternative proof of \cref{thm:universal_approx}}
\label{sec:alternative_proof}
In this section, we will construct a proof similar to the one proposed by \citet{mikulincer2022size} to prove the constant bound of required layers for a constrained MLP with Heavyside activations.

\paragraph{First layer construction}
First, let us show that the network can represent piece-wise functions at the first hidden layer.
\begin{lemma}
    \label{lem:lemma1a}
    Consider an hyperplane defined by ${\alpha^T \left(x - \beta\right)=0}$, ${\alpha \in \R_+^k}$ and ${\beta \in \R^k}$, and the open half-spaces:
    \begin{gather}
        A^+ = \{ x: \alpha^T \left(x - \beta\right) > 0\},\\
        A^- = \{ x: \alpha^T \left(x - \beta\right) < 0\}.
    \end{gather}
    A single neuron in the first hidden layer of an MLP with non-negative weights can approximate
    \footnote{Note that ${\sigma\layer{1}(\pm\infty)}$ needs not be finite.}:
    $$
    h\layer{1}(x) \approx 
    \begin{cases}
        \sigma\layer{1}(+\infty),  & \text{if } x \in A^+\\
        \sigma\layer{1}(-\infty),  & \text{if } x \in A^-\\
        \sigma\layer{1}(0),  & \text{otherwise }   
    \end{cases}
    $$
\end{lemma}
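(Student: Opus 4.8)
The plan is to realize the claimed piecewise-constant behaviour with a single affine pre-activation whose sign detects on which side of the hyperplane the input lies, and then to rescale that pre-activation so that the one-sided limits of $\sigma\layer{1}$ are attained. Concretely, for the hidden unit I would set its weight vector to $w = \lambda\,\alpha^T$ and its bias to $b = -\lambda\,\alpha^T\beta$, where $\lambda > 0$ is a scalar to be sent to $+\infty$. Because $\alpha \in \R_+^k$ and $\lambda > 0$, the weight vector $w = \lambda\alpha^T$ has non-negative entries, so the non-negative weight constraint is respected; the bias carries no sign constraint, so the choice $b = -\lambda\alpha^T\beta$ is always admissible. This is a direct specialization of the construction already used in \cref{lem:lemma1}.

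With this choice the pre-activation collapses to $wx + b = \lambda\,\alpha^T(x - \beta)$, a positive rescaling of the signed quantity $\alpha^T(x-\beta)$ that defines the hyperplane. The three cases then follow by inspecting its sign as $\lambda \to +\infty$: on $A^+$ the quantity is strictly positive, so $\lambda\,\alpha^T(x-\beta) \to +\infty$ and $h\layer{1}(x) \to \sigma\layer{1}(+\infty)$; on $A^-$ it is strictly negative, so the argument tends to $-\infty$ and $h\layer{1}(x) \to \sigma\layer{1}(-\infty)$; and on the hyperplane itself $\alpha^T(x-\beta) = 0$ for every $\lambda$, so $h\layer{1}(x) = \sigma\layer{1}(0)$ holds exactly, uniformly in $\lambda$. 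I would stress that the displayed result is an approximation statement: for any finite $\lambda$ the unit interpolates smoothly across the hyperplane, and the piecewise-constant profile is recovered only in the limit, which is precisely why the conclusion is written with $\approx$.

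I do not expect a genuine obstacle, since the argument is essentially a one-line substitution together with a limit. The only points needing care are bookkeeping ones. First, one must keep the weight constraint intact while still being free to place the hyperplane arbitrarily: this is handled by the hypothesis $\alpha \in \R_+^k$, which lets the non-negative weights encode the required normal direction, together with the unconstrained bias, which supplies the offset $\beta$. Second, the footnote caveat that $\sigma\layer{1}(\pm\infty)$ need not be finite must be honoured: the construction never requires these limits to be finite, because only the one-sided limiting values (finite or $\pm\infty$) are referenced in the three cases, so the same argument applies whether $\sigma\layer{1}$ saturates or diverges on either side.
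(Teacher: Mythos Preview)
Your proposal is correct and follows the same approach as the paper: set the weight to $w=\lambda\alpha^T$ and the bias to $-\lambda\alpha^T\beta$, so that the pre-activation equals $\lambda\,\alpha^T(x-\beta)$, and then send $\lambda\to+\infty$ to obtain the three cases according to the sign of $\alpha^T(x-\beta)$. Your write-up is slightly more careful than the paper's own (you make the sign of the bias explicit and spell out why the non-negative weight constraint is respected), but the argument is identical.
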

\begin{proof}
    Denote by ${w}$ the weights and by ${b}$ the bias associated with the hidden unit in consideration.
    Then, for any ${\lambda \in \R_+}$, setting the parameters to 
    ${w = \lambda \alpha^T}$ and ${b = \lambda \alpha^T \beta}$ we have that:
    $$
    h 
    = \sigma\layer{1}\left(w x + b \right)
    = \sigma\layer{1}\left(\lambda \alpha^T \left(x - \beta \right)\right)
    $$
    in the limit, we get:
    $$
    h\layer{1}(x) \approx \lim_{\lambda \rightarrow +\infty} 
    \sigma\layer{1}\left(
    \lambda \alpha^T \left(x - \beta \right)
    \right)
    $$
    The limit is either ${\sigma\layer{1}(+\infty)}$, ${\sigma\layer{1}(-\infty)}$ or ${\sigma\layer{1}(0)}$ depending on the sign of ${\alpha^T \left(x - \beta \right)}$, proving that
    $$
    h\layer{1}(x) \approx \begin{cases}
        \sigma\layer{1}(+\infty),  & \text{if } \alpha^T \left(x - \beta\right) > 0\\
        \sigma\layer{1}(-\infty),  & \text{if } \alpha^T \left(x - \beta\right) < 0\\
        \sigma\layer{1}(0),  & \text{if } \alpha^T \left(x - \beta\right) = 0\\
    \end{cases}
    $$
\end{proof}
For an easier interpretation of the just stated construction, we show in \cref{fig:first_hidden} some samples from the family of functions that can be learned with this first hidden layer.

\begin{figure}[ht]
    \begin{center}
        \centerline{\includegraphics[width=1\columnwidth]{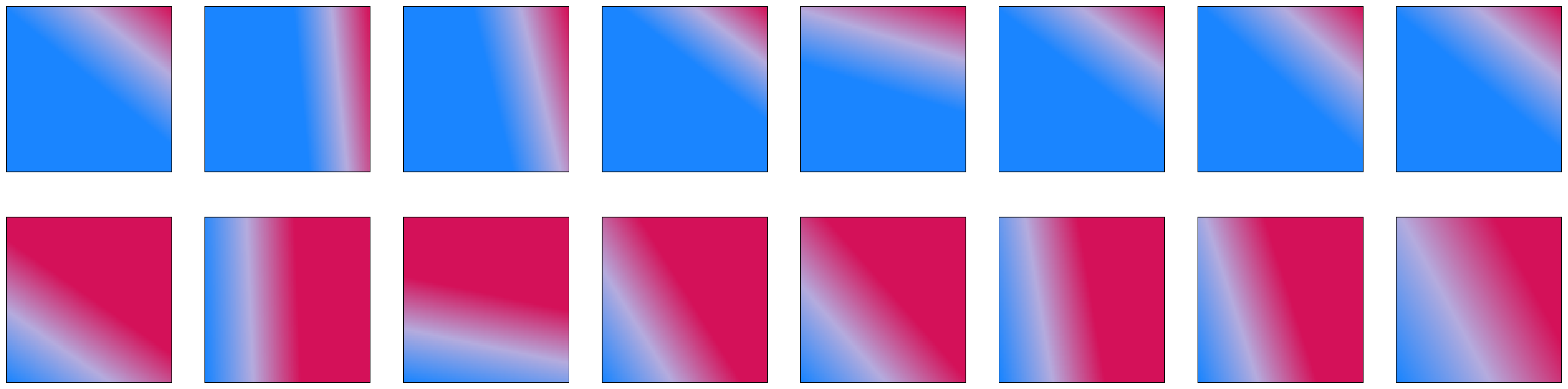}}
        \caption{Examples of learnable functions at the first hidden layer.}
        \label{fig:first_hidden}
    \end{center}
\end{figure}

\paragraph{Second layer construction}
Using \cref{lem:lemma1a}, we can show that alternating saturation directions in the activations is sufficient to represent indicator functions of intersections and unions of positive half-spaces.

\begin{lemma}
    \label{lem:lemma2a}
    If ${\sigma\layer{1} \in \mathcal{S}^+, \sigma\layer{2} \in \mathcal{S}^-}$, there exists a rescaling factor ${\gamma \in \R_+}$ such that a single unit in the second hidden layer of an MLP with non-negative weights, can approximate:
    $$
    h\layer{2}(x) \approx + \gamma \indicator{A^{\cap}}(x)
    $$
    for any ${A^{\cap}=\bigcap_{i=1}^n A_i^+}$.
    
    Similarly, if ${\sigma\layer{1} \in \mathcal{S}^-, \sigma\layer{2} \in \mathcal{S}^+}$, it can approximate
    $$
    h\layer{2}(x) \approx + \gamma \indicator{A^{\cup}}(x) -\gamma
    $$
    for any ${A^{\cup}=\bigcup_{i=1}^n A_i^+}$.
\end{lemma}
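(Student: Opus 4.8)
The plan is to reduce both claims to a single-layer application of \cref{lem:lemma1a}, which sets up the first hidden layer, followed by the intersection construction of \cref{lem:lemma2}, which combines the units in the second hidden layer. Throughout I would invoke \cref{prop:saturate_to_zero_equivalence} to assume the activations saturate to zero, so that $\sigma\layer{1}(+\infty)=0$ when $\sigma\layer{1}\in\mathcal{S}^+$, $\sigma\layer{1}(-\infty)=0$ when $\sigma\layer{1}\in\mathcal{S}^-$, and analogously for $\sigma\layer{2}$. Monotonicity then fixes the sign of every reachable value: a left-saturating $\sigma\layer{2}$ attains nonnegative values while a right-saturating $\sigma\layer{2}$ attains nonpositive ones, and this is precisely what pins down the sign of the constant $\gamma$ produced in each case.

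For the intersection case $\sigma\layer{1}\in\mathcal{S}^+$, $\sigma\layer{2}\in\mathcal{S}^-$, I would first allocate $n$ units in the first hidden layer, applying \cref{lem:lemma1a} to the hyperplane bounding each $A_i^+$. Since $\sigma\layer{1}(+\infty)=0$, the $i$-th unit satisfies $h\layer{1}_i(x)\approx 0$ on $A_i^+$ and $h\layer{1}_i(x)\approx\sigma\layer{1}(-\infty)<0$ on $A_i^-$. These are exactly the hypotheses of the $\mathcal{S}^-$ branch of \cref{lem:lemma2} with $A_i=A_i^+$, so a single second-layer unit approximates $\gamma\,\indicator{A^{\cap}}(x)$ with $A^{\cap}=\bigcap_i A_i^+$; choosing the bias $b$ so that $\sigma\layer{2}(b)=\gamma$ for some positive $\gamma$ in the image of $\sigma\layer{2}$ yields the claim.

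The union case $\sigma\layer{1}\in\mathcal{S}^-$, $\sigma\layer{2}\in\mathcal{S}^+$ I would handle by De Morgan. Now $\sigma\layer{1}(-\infty)=0$, so the first-layer units give $h\layer{1}_i(x)\approx\sigma\layer{1}(+\infty)>0$ on $A_i^+$ and $h\layer{1}_i(x)\approx 0$ on $A_i^-$. Applying the $\mathcal{S}^+$ branch of \cref{lem:lemma2} with $A_i=A_i^-$ produces a unit approximating $\gamma\,\indicator{\bigcap_i A_i^-}(x)$. Since $\bigcap_i A_i^-=\overline{A^{\cup}}$ for $A^{\cup}=\bigcup_i A_i^+$, and since right-saturation forces $\gamma\le 0$, writing $\gamma=-\gamma'$ with $\gamma'\in\R_+$ converts this into $-\gamma'\,\indicator{\overline{A^{\cup}}}(x)=\gamma'\,\indicator{A^{\cup}}(x)-\gamma'$, which is the stated form after renaming $\gamma'$ to $\gamma$.

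The limit computations justifying each ``$\approx$'' are identical to those already carried out in \cref{lem:lemma1a} and \cref{lem:lemma2}, so I would not repeat them. The part that genuinely needs care is the sign-and-offset bookkeeping in the union case: one must verify that $\overline{A^{\cup}}$ is exactly the set on which all first-layer units vanish (so the boundary hyperplanes, a measure-zero set, are irrelevant to the approximation) and that the only admissible $\gamma$ from a right-saturating $\sigma\layer{2}$ carries the sign needed to generate the $-\gamma$ offset. A secondary point worth confirming is that $\sigma\layer{1}(\pm\infty)$ being infinite rather than finite causes no trouble, since the second-layer pre-activation still diverges to the correct signed infinity, where $\sigma\layer{2}$ saturates to zero.
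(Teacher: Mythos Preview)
Your proposal is correct and takes essentially the same approach as the paper: build the first-layer units via \cref{lem:lemma1a}, then push the pre-activation sum to $\pm\infty$ and exploit the alternating saturation of $\sigma\layer{2}$ to recover an indicator, with De~Morgan handling the union case. The only organizational difference is that you invoke the main-text \cref{lem:lemma2} to encapsulate the limit computation, whereas the appendix proof re-derives that limit inline; the underlying argument, including the sign bookkeeping on $\gamma$ and the measure-zero hyperplane caveat you flag, is identical.
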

\begin{proof}
    Denote by ${w}$ the weights and by ${b}$ the bias associated to the hidden unit in consideration at the second layer.
    For any ${\lambda \in \R_+}$ , setting the weights to ${w = \lambda \mathbf{1}^T}$ we have that
    $$
    h\layer{2}(x)
    = \sigma\layer{2}\left(w h\layer{1} + b \right)
    = \sigma\layer{2}\left(
    b + \lambda \sum_i h\layer{1}_i
    \right)
    $$
    
    Taking the limit, the result only depends on the sign of ${\sum_i h\layer{1}_i}$.
    Using \cref{lem:lemma1a}, we can ensure that it is possible to have
    $$
    h\layer{1}_i(x) \approx 
    \begin{cases}
        \sigma\layer{1}(+\infty),  & \text{if } x \in A_i^+\\
        \sigma\layer{1}(-\infty),  & \text{if } x \in A_i^-\\
    \end{cases}
    $$
    From here, there are two cases, depending on the saturation of the activations. We will only prove the case when the activations saturate to zero to avoid needlessly complicated formulas. However, the result holds even in the general case.
    
    If we assume ${\sigma\layer{1} \in \mathcal{S}^+, \sigma\layer{2} \in \mathcal{S}^-}$:\\
    For ${x \in \bigcap_{i=1}^n A_i^+}$, we have ${h\layer{1}_i(x)=\sigma\layer{1}(+\infty)=0}$, 
    while for ${x \not\in \bigcap_{i=1}^n A_i^+}$ have ${h\layer{1}_i(x)<\sigma\layer{1}(+\infty)=0}$. Therefore 
    $$
    \lim_{\lambda \rightarrow +\infty} h\layer{2}(x) 
    \begin{cases} 
        \sigma\layer{2}\left(b\right)=\gamma,  & \text{if } x \in \bigcap_{i=1}^n A_i^+,\\
        \sigma\layer{2}\left(-\infty\right)=0,  & \text{otherwise}\\
    \end{cases}
    $$
    where ${\gamma}$ can be any element of the image of ${\sigma\layer{2}}$, which is a non negative function.
    Therefore for ${A^{\cap}=\bigcap_{i=1}^n A_i^+}$ 
    $${h\layer{2}(x) \approx \gamma \indicator{A^{\cap}}(x).}$$
    
    If instead we assume ${\sigma\layer{1} \in \mathcal{S}^-, \sigma\layer{2} \in \mathcal{S}^+}$:\\
    For ${x \in \bigcap_{i=1}^n A_i^-}$, we have ${h\layer{1}_i(x)=\sigma\layer{1}(-\infty)=0}$, 
    while for ${x \in \bigcup_{i=1}^n A_i^-}$ have ${h\layer{1}_i(x)>0}$. 
    $$
    \lim_{\lambda \rightarrow +\infty} h\layer{2}(x) 
    \begin{cases} 
        \sigma\layer{2}\left(b\right)=-\gamma,  & \text{if } x \not \in \bigcup_{i=1}^n A_i^+,\\
        \sigma\layer{2}\left(+\infty\right)=0,  & \text{otherwise}\\
    \end{cases}
    $$
    where ${-\gamma}$ can be any element of the image of ${\sigma\layer{2}}$, that is now a non positive function. Therefore for ${A^{\cup}=\bigcup_{i=1}^n A_i^+}$
    $${h\layer{2}(x) \approx -\gamma(1-\indicator{A^{\cup}}(x))=\gamma\indicator{A^{\cup}}(x) -\gamma}$$
\end{proof}

For a more intuitive understanding of the class of functions that such a constructed second layer can learn, in \cref{fig:second_hidden}, we report some samples from that class of functions.

\begin{figure}[ht]
    \begin{center}
        \centerline{\includegraphics[width=1\columnwidth]{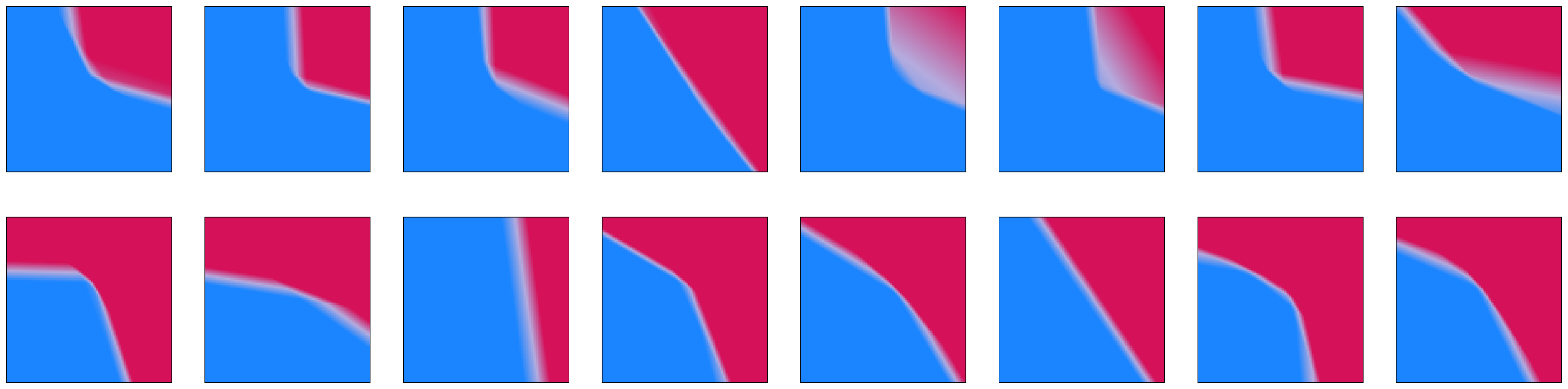}}
        \caption{Examples of learnable indicator functions at the second hidden layer.}
        \label{fig:second_hidden}
    \end{center}
\end{figure}

\paragraph{Third layer construction}

Finally, let us show that a hidden unit in the third layer can perform union and intersection operations when the second-layer representations are indicator functions of sets.
\begin{lemma}
    \label{lem:lemma3a}
    If ${h\layer{2}_i(x)=\gamma \indicator{A_i}}$, there exists a rescaling factor ${\delta \in \R_+}$ such that a single unit in the third hidden layer of an MLP with non-negative weights can approximate:
    $$
    h\layer{3}(x) \approx + \delta \indicator{A}(x)
    $$
    for any ${A=\bigcup_{i=1}^n A_i}$ when ${\sigma\layer{3} \in \mathcal{S}^+}$, and for any ${A=\bigcup_{i=1}^n A_i}$ if  ${\sigma\layer{3} \in \mathcal{S}^-}$
\end{lemma}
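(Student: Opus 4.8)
The plan is to reuse the summation trick of \cref{lem:lemma2a}, now applied to the clean indicator inputs $h\layer{2}_i = \gamma\indicator{A_i}$. First I would give the single third-layer unit non-negative weights $w = \lambda\mathbf{1}^T$ and a bias $b$, so that $h\layer{3}(x) = \sigma\layer{3}\!\left(b + \lambda\gamma\sum_{i=1}^n \indicator{A_i}(x)\right)$. The combinatorial observation that drives everything is that $\sum_i \indicator{A_i}(x) = 0$ exactly when $x\in\overline{\bigcup_i A_i}$, while $\sum_i \indicator{A_i}(x)\ge 1$ for every $x\in\bigcup_i A_i = A$. Hence, letting $\lambda\to+\infty$, the pre-activation stays equal to $b$ on $\overline{A}$ and diverges to $+\infty$ on $A$, and the analysis collapses to comparing $\sigma\layer{3}(b)$ with $\sigma\layer{3}(+\infty)$.

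For the right-saturating case $\sigma\layer{3}\in\mathcal{S}^+$ we have $\sigma\layer{3}(+\infty)=0$, so the unit becomes $0$ on $A$ and $\sigma\layer{3}(b)$ on $\overline{A}$. Since a monotone non-decreasing activation with limit $0$ at $+\infty$ necessarily has non-positive image (e.g. $\text{ReLU}'$), I can choose $b$ so that $\sigma\layer{3}(b)=-\delta$ for any prescribed $\delta\in\R_+$ in that image, giving $h\layer{3}(x)\approx -\delta\,\indicator{\overline{A}}(x)=\delta\,\indicator{A}(x)-\delta$. The additive constant $-\delta$ is harmless: exactly as the $-\gamma$ offset in \cref{lem:lemma2a}, it is absorbed into the bias of the following layer, so the unit realizes the claimed $\delta\,\indicator{A}(x)$ with $A=\bigcup_i A_i$.

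For the left-saturating case the difficulty is that $\sigma\layer{3}(+\infty)$ is now the \emph{non}-saturating side (possibly $+\infty$, as for $\text{ReLU}$), so driving the union region to $+\infty$ no longer yields a finite value. The remedy is to make the union coincide with the saturating side instead: I would pass to the complementary encoding $\gamma(\indicator{A_i}-1)$, which vanishes on $A_i$ and equals $-\gamma$ off it, so that the summed pre-activation vanishes precisely on $\bigcap_i A_i$ and is strictly negative elsewhere, sending the argument to $-\infty$ off $\bigcap_i A_i$. With $\sigma\layer{3}\in\mathcal{S}^-$ (whose image is non-negative, so $\sigma\layer{3}(b)=\delta$ is attainable) this produces $0$ off $\bigcap_i A_i$ and $\delta$ on it, i.e. $\delta\,\indicator{\bigcap_i A_i}$. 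Invoking De Morgan, $\bigcup_i A_i=\overline{\bigcap_i \bar A_i}$, and relabeling the family by complements recovers the union indicator, once again up to an additive constant folded into the next bias.

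The main obstacle I expect is precisely this sign and offset bookkeeping: one must guarantee that the \emph{finite}-valued side of $\sigma\layer{3}$, never its possibly-unbounded side, is the one that lands on the region carrying the nonzero output $\delta$. For one-sided unbounded activations such as $\text{ReLU}$ this forces a definite choice — reading the union off the saturating side through the $\mathcal{S}^+$ route, or through pre-complemented inputs and De Morgan in the $\mathcal{S}^-$ route — and the resulting constant offsets must be tracked carefully and discharged into the subsequent layer so that the statement reads as a clean $\delta\,\indicator{A}$.
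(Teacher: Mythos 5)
The paper actually states \cref{lem:lemma3a} without any proof (it is immediately followed by the proof of the main theorem), so the natural benchmark is the technique of \cref{lem:lemma2a} and of the main-text \cref{lem:lemma2}, which is exactly what you reuse. Your $\mathcal{S}^+$ branch is correct and matches that machinery: with weights $\lambda\mathbf{1}^T$ the count $\sum_i\indicator{A_i}(x)$ separates $\overline{A}$ (pre-activation $b$) from $A=\bigcup_i A_i$ (pre-activation $\to+\infty$, activation $\to 0$), the image of a right-saturating non-decreasing activation is non-positive, and you obtain $\delta\,\indicator{A}(x)-\delta$ with the constant absorbed into the next bias, precisely as the $-\gamma$ term in \cref{lem:lemma2a}.

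The $\mathcal{S}^-$ branch, however, has a genuine gap in its last step. Everything up to the clean intersection $\delta\,\indicator{\bigcap_i A_i}$ is fine: the ``complementary encoding'' is just the $\lambda$-dependent bias $b-\lambda\gamma n$, which is implementable. But the concluding De Morgan move is not. First, ``relabeling the family by complements'' is not an operation the unit can perform: its inputs are fixed to $h\layer{2}_i=\gamma\indicator{A_i}$, and producing $\indicator{\bar{A}_i}=1-\indicator{A_i}$ inside the pre-activation requires a \emph{negative} weight on $h\layer{2}_i$; a bias shift can only add constants, never flip the sign of the indicator's coefficient. Second, even if you were handed $\indicator{\bigcap_i\bar{A}_i}$, De Morgan gives $\delta\,\indicator{\bigcap_i\bar{A}_i}=\delta-\delta\,\indicator{\bigcup_i A_i}$, i.e.\ the union indicator with coefficient $-\delta<0$; that is \emph{not} ``$+\delta\,\indicator{A}$ up to an additive constant'', and the sign cannot be repaired downstream because the following layer also has non-negative weights. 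In fact the $\mathcal{S}^-$/union claim is false for, e.g., $\sigma\layer{3}=\text{ReLU}$: taking four test points (in $A_1$ only, in $A_2$ only, in $A_1\cap A_2$, in neither), the pre-activations satisfy $p_{12}=p_1+p_2-p_0$; forcing the three union points to a common positive value makes $\text{ReLU}$ act as the identity there, which forces $p_0$ to that same value, contradicting the required gap $\delta>0$.

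The reassuring news is that the statement you were asked to prove is itself garbled: both cases of \cref{lem:lemma3a} read ``$A=\bigcup_{i=1}^n A_i$'', which is evidently a typo. Consistency with \cref{lem:lemma2a}, with the main-text \cref{lem:lemma2}, and with how the lemma is invoked afterwards (unions are formed when $\sigma\layer{3}\in\mathcal{S}^+$, while ``for the opposite activation pattern \dots intersections of $A^{\cup}_{x_i}$'' are used) shows that the intended $\mathcal{S}^-$ case is $A=\bigcap_{i=1}^n A_i$ — which is exactly the clean intersection indicator you construct before invoking De Morgan. The correct fix is therefore to stop at $\delta\,\indicator{\bigcap_i A_i}$ and amend the statement, not to push through the complementation.
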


We are finally ready to prove the main result.

\begin{proof}[Proof of \cref{thm:universal_approx}]
    Since the function to interpolate is monotonic, for any couple of points ${x_{i} < x_{j}: f(x_{i})<f(x_{j})}$ it is possible to find a hyperplane with non-negative normal, with positive and negative half spaces denoted by ${A^+_{j/i}}$ and ${A^-_{j/i}}$, such that ${x_{i}\in A^-_{j/i}, x_{j}\in A^+_{j/i}}$.
    
    Let us now construct the sets:
    \begin{gather}
        A^{\cap}_{x_i}=\bigcap_{j:x_j<x_i} A^+_{i/j} \\
        A^{\cup}_{x_i}=\bigcup_{j:x_j>x_i} A^+_{i/j}
    \end{gather}
    This ensures that ${x_j<x_i \implies x_j \not \in A^{\cap}_{x_i}}$.
    Also, since ${A^{\cap}_{x_i}}$ is obtained from the intersection of positive half-spaces, \cref{lem:lemma2a} ensures a hidden unit at the second hidden layer is able to learn ${h\layer{2}(x)\approx \indicator{A^{\cap}_{x_i}}(x)}$.
    Now note that ${A_{x_i}=\bigcup_{j:f(x_j) > f(x_i)} A^{\cap}_{x_j}}$ contains only and all points ${x_j}$ such that ${f(x_j) \geq f(x_i)}$. Moreover, from \cref{lem:lemma3a}, we know that hidden units in the third layer can approximate ${\indicator{A_{x_i}}}$.
    
    As per the previous layers, we show in \cref{fig:third_hidden} some samples of functions that the third layer, constructed as just reported, can learn.
    \begin{figure}[ht]
        \begin{center}
            \centerline{\includegraphics[width=1\columnwidth]{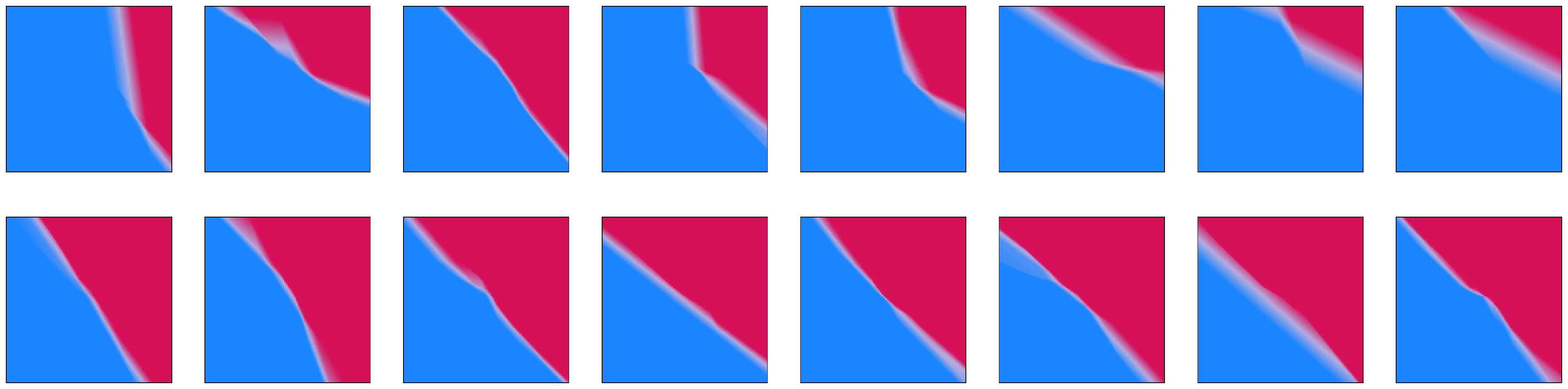}}
            \caption{Examples of learnable functions at the third hidden layer.}
            \label{fig:third_hidden}
        \end{center}
    \end{figure}
    
    \paragraph{Fourth layer construction}
    To conclude the proof, take the last layer parameters to be
    ${w_i\layer{4}=f(x_{i+1})-f(x_i), b\layer{4}=f(x_1)}$. This produces the following function approximation $${\bar{f}(x)=f(x_1) + \sum_i \indicator{A_{x_i}} (f(x_{i+1})-f(x_i))}$$.
    ${\bar{f}(x)}$ evaluated at any of the points ${x_i}$ provides a telescopic sum where all the terms elide, leaving ${\bar{f}(x_i)=f(x_i)}$.
    For the opposite activation pattern, the same result can be obtained in a similar fashion, considering intersections of ${A^{\cup}_{x_i}=\bigcup_{j:x_j>x_i} A^+_{i/j}}$ instead.
\end{proof}


\end{document}